\newcommand{\BibTeX}{\textsc{B\kern-0.1emi\kern-0.017emb}\kern-0.15em\TeX}
\newcommand*{\tikzmk}[1]{\tikz[remember picture,overlay,] \node (#1) {};\ignorespaces}
\newcommand{\boxit}[1]{\tikz[remember picture,overlay]{\node[yshift=3pt,fill=#1,opacity=.25,fit={(A)($(B)+(.3\linewidth,.8\baselineskip)$)}] {};}\ignorespaces}
\newcommand\@erelb@r[1]{%
  \mathrel{\tikz[baseline=-.5ex]\draw[#1] (0,0)--(.5,0);}
}
\newcommand{\erelbar}[1]{\@erelbar#1}
\def\@erelbar#1#2{%
  \ifcase\numexpr#1*4+#2\relax
    \@erelb@r{-}\or     
    \@erelb@r{->}\or    
    \@erelb@r{-|}\or    
    \@erelb@r{-o}\or   
    \@erelb@r{<-}\or    
    \@erelb@r{<->}\or   
    \@erelb@r{<-|}\or   
    \@erelb@r{<-o}\or   
    \@erelb@r{|-}\or    
    \@erelb@r{|->}\or   
    \@erelb@r{|-|}\or   
    \@erelb@r{|-o}\or 
    \@erelb@r{o-}\or   
    \@erelb@r{o->}\or  
    \@erelb@r{o-|}\or  
    \@erelb@r{o-o}    
  \else
    \@wrong
  \fi
}
\begin{document}
\title{Learning LWF Chain Graphs: an Order Independent Algorithm}
%
%
%
\author{\Name{Mohammad Ali Javidian} \Email{javidian@email.sc.edu}\and
   \Name{Marco Valtorta} \Email{mgv@cse.sc.edu}\and
   \Name{Pooyan Jamshidi} \Email{pjamshid@cse.sc.edu}\\
   \addr University of South Carolina, Columbia, SC 29208, USA }
   
\maketitle              
\begin{abstract}LWF chain graphs combine directed acyclic graphs and undirected graphs. 
  We present a PC-like algorithm that finds the structure of chain graphs under the faithfulness assumption to resolve the problem of scalability of the proposed algorithm by Studen{\'y} (1997).  
  We prove that our PC-like algorithm is order dependent, in the sense that the output can depend on the order in which the variables are given. This order dependence 
  can be very pronounced in high-dimensional settings. We propose two modifications of the PC-like algorithm that remove part or all of this order dependence. Simulation results under a variety of settings demonstrate the competitive performance of the PC-like algorithms in comparison with the decomposition-based method, called  LCD algorithm, proposed by Ma et al. (2008) in low-dimensional settings and improved performance in high-dimensional settings.
\end{abstract}
\begin{keywords}Probabilistic graphical models; Learning algorithms; Chain graphs; Bayesian networks.
\end{keywords}
\section{Introduction}
Probabilistic graphical models (PGMs) are now widely accepted as a powerful and
mature tool for reasoning and decision making under uncertainty.
A PGM is a compact representation of a joint probability
distribution, from which we can obtain marginal and conditional probabilities \citep{sucar15}. In fact, any PGM consists of two main components: (1) a graph that defines the structure of that model; and (2) a joint distribution over random variables of the model.
Two types of graphical representations of distributions are commonly used, namely, Bayesian networks and Markov networks. Both families encompass the properties of factorization and independence, but they differ in the set of independencies they can encode and the factorization of the distribution that they induce.
  
Currently systems containing both causal and non-causal relationships are mostly modeled with \textit{directed acyclic graphs} (DAGs). Chain graphs (CGs) are a type of mixed graphs, admitting both directed and undirected edges, which contain no partially directed cycles.
\begin{figure}[ht]
    \centering
    \fbox{
    \includegraphics[scale=.14]{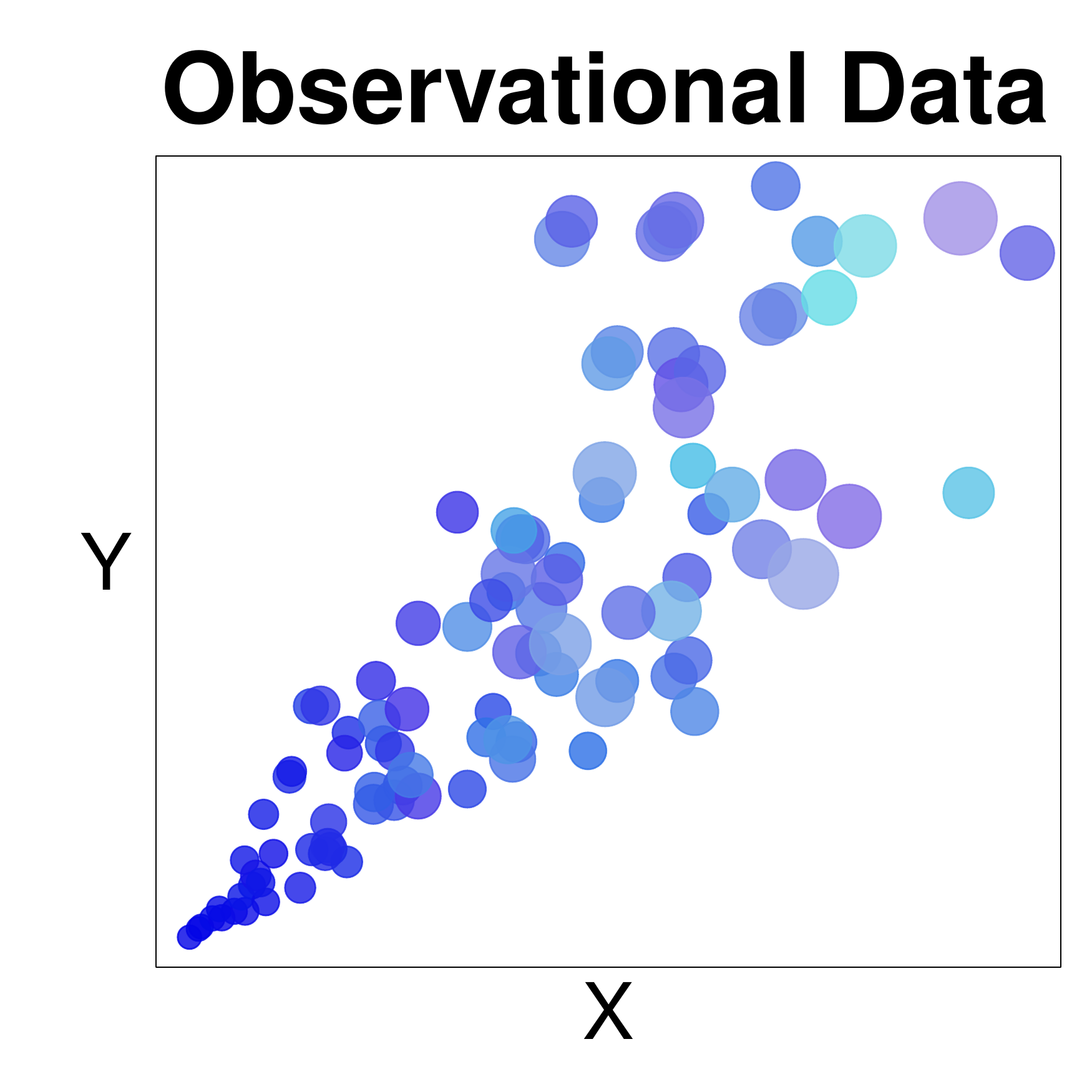}
    \put(0,30){\hbox{\includegraphics[scale=.02]{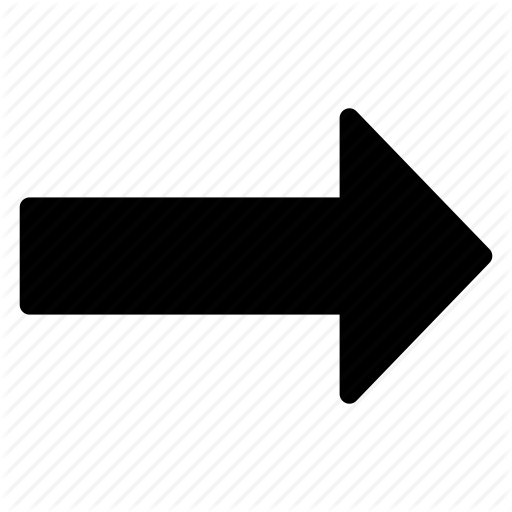}}}
 \includegraphics[scale=.13]{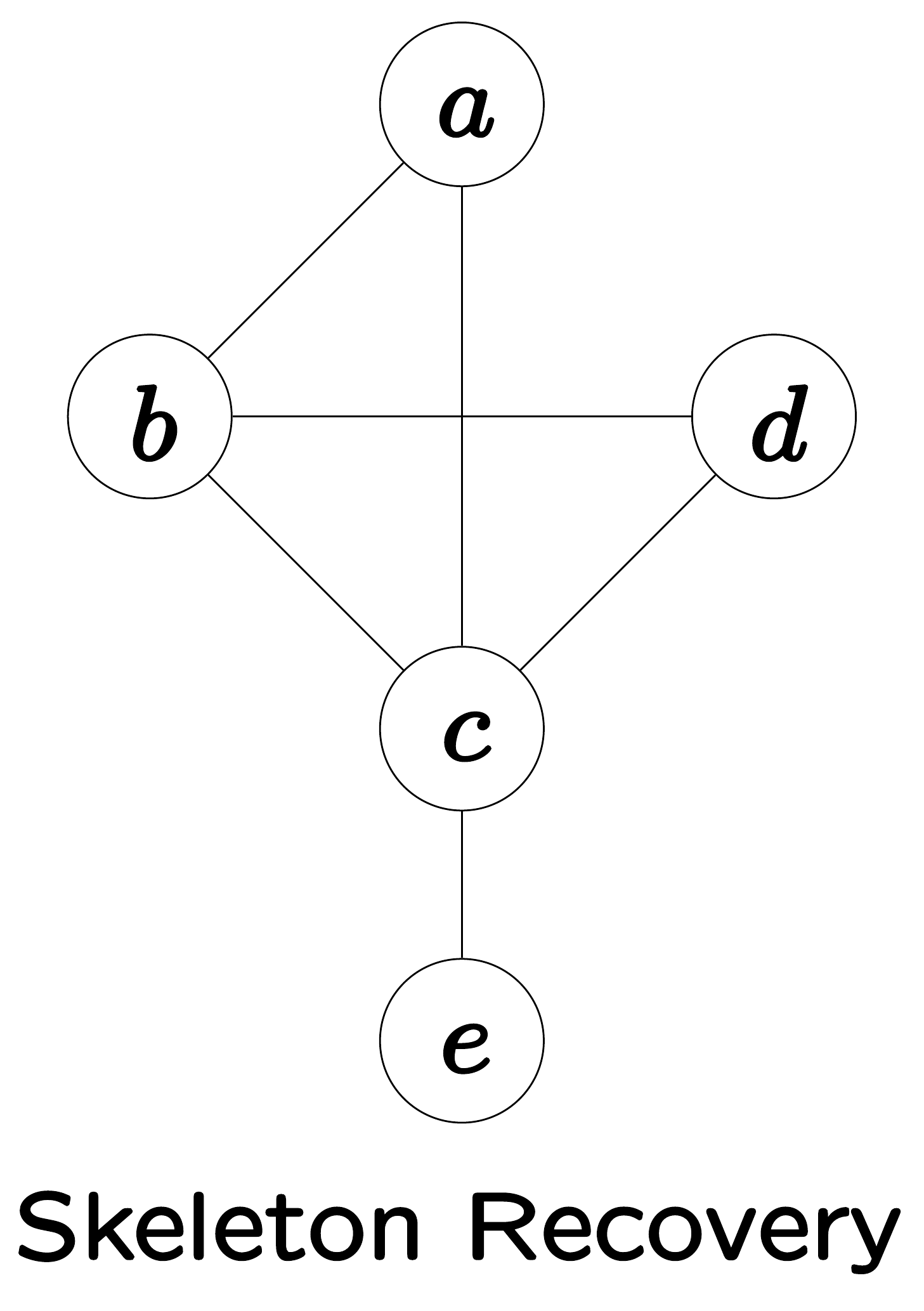} 
 \put(0,30){\hbox{\includegraphics[scale=.02]{images/rarrow.png}}}
 \includegraphics[scale=.13]{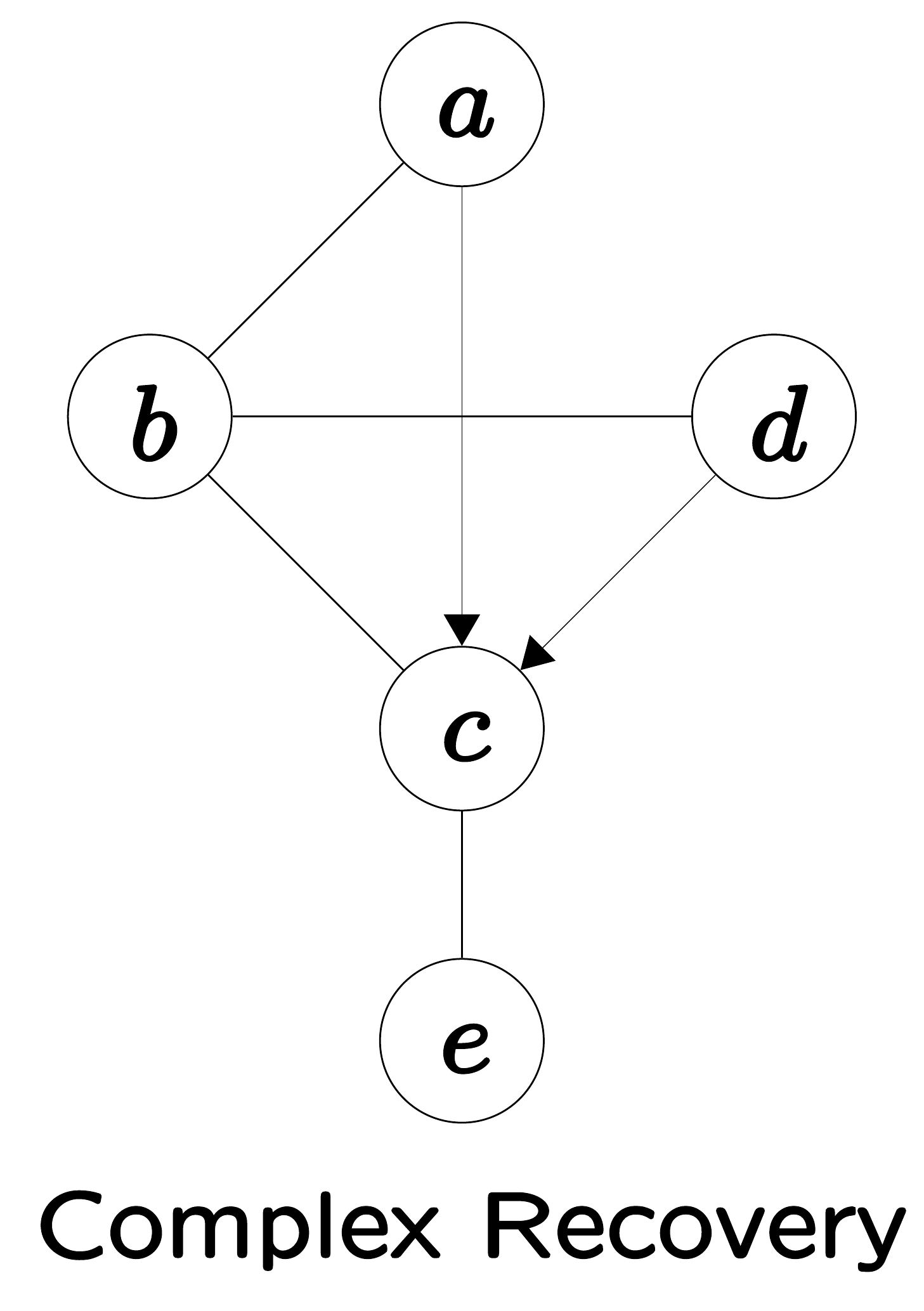}
 }
    \caption{\footnotesize{Learning LWF CGs with PC-like algorithm.}}
    \label{fig:outline}
\end{figure}
So, CGs may contain two types of edges,
the directed type that corresponds to the causal relationship in DAGs and a
second type of edge representing a symmetric relationship \citep{s2}. \textit{LWF Chain graphs} were introduced by \textbf{L}auritzen, \textbf{W}ermuth and \textbf{F}rydenberg \citep{f}, \citep{lw} as a generalization of graphical models based on undirected graphs and DAGs and widely studied e.g., in~\citep{l,d,mxg,psn,studeny97,volf,studeny09,roverato05,roverato06,jv-pgm18}, , among others.. 
From the \textit{causality} point of view, in an LWF CG directed edges represent \textit{direct causal effect}s, and undirected edges represent causal effects due to \textit{interference} \citep{Shpitser17}, \citep{Ogburn18}, and \citep{Bhattacharya19}.

One important aspect of PGMs is the possibility of learning the structure of models directly
from sampled data. Six \textit{constraint-based} learning algorithms, that use a statistical analysis to test the presence of a
conditional independency, exist for learning LWF CGs: (1) the inductive causation like (IC-like) algorithm \citep{studeny97}, (2) the decomposition-based algorithm called LCD  (\textbf{L}earn \textbf{C}hain graphs via \textbf{D}ecomposition) \citep{mxg}, (3) the answer set programming (ASP) algorithm \citep{sjph}, (4) the inclusion optimal (CKES) algorithm \citep{psn}, and (5) the local structure learning of chain graphs with the false discovery rate control \citep{Wang2019}, (5) the local structure learning of chain graphs with the false discovery rate control \citep{Wang2019}, and (6) the Markov blanket discovery (MbLWF) algorithm \citep{jvj-UAI2020}.

Similar to the \textit{inductive causation (IC)} algorithm \citep{verma90}, the IC-like algorithm \citep{studeny97} cannot be applied to large numbers of
variables because for testing whether there is a set separating $X$ and $Y$ in the skeleton recovery, the IC-like algorithm
might search all $2^{n-2}$ subsets of all $n$ random variables not including $X$ and $Y$. In order to overcome the scalability of the IC-like algorithm, we propose a constraint-based method for learning the structural of chain graphs based on the idea of the \textit{PC algorithm} proposed by \textbf{P}eter Spirtes and \textbf{C}lark Glymour \citep{sgs}, which is used for learning the structure of Bayesian networks (BNs). Our method modifies the IC-like algorithm to make it computationally feasible in the phase of skeleton recovery and to avoid the time consuming procedure of complex recovery. 
We prove that the proposed PC-like algorithm in this paper is order dependent,
in the sense that the output can depend on the order in which the variables are given. We propose several modifications of the PC-like algorithm that remove part or all of this order dependence, but do not change the result when perfect conditional independence information is used. When
applied to data, the modified algorithms are partly or fully order independent. 

Our proposed  algorithm, called the \textit{SPC4LWF} (Stable PC-like for LWF CGs), similarly to the LCD algorithm, is able to exploit parallel computations for scaling up the task of learning LWF CGs. This will enable effective LWF chain graph discovery on large/high-dimensional datasets. In fact, lower complexity, higher power of computational independence test, better learned structure quality, along with the ability of exploiting parallel computing make our proposed algorithm in this paper more desirable and suitable for big data analysis when LWF chain graphs are being used.
Our main contributions are the following:

\noindent \textbf{(1)} We propose a PC-like algorithm for learning the structure of LWF CGs under the faithfulness assumption that includes two main procedures: (i) a feasible method for learning CG skeletons following the same idea of the PC algorithm, (ii) a polynomial time procedure for complex recovery similar to the proposed approach in \citep{mxg}. The whole procedure is shown in Figure \ref{fig:outline}. The algorithm and its soundness are discussed in section \ref{sec:opclikelwf}. 

\noindent \textbf{(2)} In section \ref{sec:SPSLWF}, we show that our proposed PC-like algorithm in section \ref{sec:opclikelwf} is order dependent. Then, we propose two modifications of this algorithm that remove part or all of this order dependence. The soundness of modified algorithms are discussed in section \ref{sec:SPSLWF}.

\noindent \textbf{(3)} We experimentally compare the performance of our proposed PC-like algorithms with the LCD algorithm in section \ref{evaluation}, and we show that the PC-like algorithms are comparable to the LCD algorithm in low-dimensional settings and superior in high-dimensional settings in terms of error measures and runtime. 

\noindent \textbf{(4)} We release supplementary material (\url{https://github.com/majavid/PC4LWF2020}) including data and an R package that implements the proposed algorithms.

\section{Definitions and Concepts}
Below, we briefly list some of the central concepts used in this paper
(see \citep{l} for more details).
In this paper, we consider graphs containing both directed ($\to$) and undirected ($-$) edges
and largely use the terminology of \citep{l}, where the reader can also find further
details. Below we briefly list some of the central concepts used in this paper.

A \textit{path} in $G$ is a sequence of its distinct nodes $v_1,v_2,\dots,v_k, k\ge 1,$ such that $\{v_i,v_{i+1}\}$ is an edge in $G$ for every $i = 1,\dots, k - 1$. It is called a \textit{cycle} if $v_{k+1}\equiv v_1$, and $k\ge 3$. 
A \textit{chord} of a cycle $C$ is an edge not in $C$ whose endpoints 
lie in $C$. A \textit{chordless cycle} in $G$ is a cycle of length at least 4 in $G$ that has
no chord (that is, the cycle is an induced subgraph). A cycle of length 3 is both chordal and chordless.
A \textit{partially directed cycle} (or semi-directed cycle) in a graph $G$ is a sequence of $n$ distinct vertices $v_1,v_2,\dots,v_n (n\ge 3)$, and $v_{n+1}\equiv v_1$, such that
(a) $\forall i (1\le i\le n)$ either $v_i-v_{i+1}$ or $v_i\to v_{i+1}$, and
(b) $\exists j (1\le j\le n)$ such that $v_j\to v_{j+1}$.

If there is a path from  $a$ to $b$ we say that $a$ leads to $b$ and write $a\mapsto b$. The vertices $a$ such that $a\mapsto b$ and $b\not\mapsto a$ are the \textit{ancestors} $an(b)$ of $b$, and the descendants $de(a)$ of $a$ are the vertices $b$ such that $a\mapsto b$ and $b\not\mapsto a$. The non-descendants are $nd(a) = V\setminus (de(a) \cup \{a\})$.  If there is an arrow from $a$ pointing towards $b$, $a$ is said to be a parent 
of $b$. If there is an undirected edge between $a$ and $b$, $a$ and $b$ are said to be adjacent or neighbors. The boundary $bd(A)$ of a subset $A$ of vertices is the set of vertices in $V\setminus A$ that are parents or neighbors to vertices in $A$. The closure of $A$ is $cl(A)=bd(A)\cup A$. If $bd(a)\subseteq A$, for all $a\in A$ we say that $A$ is an \textit{ancestral set}. The smallest ancestral set containing $A$ is denoted by $An(A)$.

An \textit{LWF chain graph} is a graph in which there are no partially directed cycles. The chain components $\mathcal{T}$ of a chain graph are the connected components of the undirected
graph obtained by removing all directed edges from the chain graph. A \textit{minimal complex} (or simply a complex or a \textit{U}-structure) in a chain graph is an induced subgraph of the form $a\to v_1-\cdots \cdots-v_r\gets b$. The \textit{skeleton} (underlying graph) of an LWF CG $G$ is obtained from $G$ by changing all directed edges of $G$ into undirected edges. For a chain graph $G$ we define its \textit{moral graph} $G^m$ as the undirected  graph with the same vertex set but with $\alpha$ and $\beta$ adjacent in $G^m$ if and 
only if either $\alpha \to \beta$, or $\alpha - \beta$, or $\beta\to \alpha$ or if there are $\gamma_1,\gamma_2$ in the same chain 
component such that $\alpha\to \gamma_1$ and $\beta\to \gamma_2$. 

\emph{Global Markov property for LWF chain graphs:} 
	For any triple
	$(A, B,S)$ of disjoint subsets of $V$ such that $S$ separates $A$ from $B$
	in $(G_{An(A\cup B\cup S)})^m$, in the moral graph of the smallest ancestral set containing $A\cup B\cup S$, we have $A \!\perp\!\!\!\perp B | S$ i.e., $A$ is independent of $B$ given $S$. We say $S$ $c$-separates $A$ from $B$ in the chain graph $G$. 
We say that two LWF CGs $G$ and $H$ are \textit{Markov equivalent}
or that they are in the same \textit{Markov equivalence class} if they induce the same  
conditional independence restrictions.  Two CGs $G$ and $H$ are Markov equivalent if and only if they have the same skeletons and the same minimal complexes \citep{f}. Every class of Markov equivalent CGs has a unique CG with the
greatest number of undirected edges. This
graph is called the \textit{largest CG} (LCG) of the corresponding class of Markov equivalent CGs \citep{f}. 

\section{PC4LWF: a PC-Like Algorithm for Learning LWF CGs}\label{sec:opclikelwf}
In this section, we discuss how the IC-like algorithm \citep{studeny97} can be modified to obtain a computationally feasible algorithm for LWF CGs recovery. A brief review of the IC-like algorithm is presented first, then we present a PC-like algorithm, called \textit{PC4LWF}, which is a constraint-based algorithm that learns a CG from a probability distribution faithful to some CG.

The IC-like algorithm \citep{studeny97} is a constraint-based algorithm proposed for LWF CGs and is based on three sequential phases. The first phase finds the
adjacencies (skeleton recovery), the second phase orients the edges that must be oriented the same in
every CG in the Markov equivalence class (complex recovery), and the third phase transforms
this graph into the largest CG (LCG recovery).

The skeleton recovery of the IC-like algorithm works as follows: construct an undirected graph $H$ such that vertices
$u$ and $v$ are connected with an undirected edge if and only if no set $S_{uv}$ can be found such that $u\!\perp\!\!\!\perp v|S_{uv}$.
This procedure is very inefficient because this requires a number of independence tests that increases exponentially with the number of vertices.
In other words, to determine whether  there is a set separating $u$ and $v$, we might search all $2^{n-2}$ subsets of all $n$ random variables excluding $u$ and $v$. So, the complexity for investigating each possible edge in the skeleton is $O(2^n)$
and hence the complexity for constructing the skeleton is $O(n^22^n)$, where $n$ is the number of vertices in the LWF CG.
Since it is enough to find one $S$ making $u$ and $v$ independent to remove the undirected edge $u\erelbar{00}v$, one obvious short-cut is to do the tests in some order, and skip unnecessary tests. In the PC algorithm for BNs the revised edge removal step is done as shown in Algorithm \ref{alg:edgremoval}.
\begin{algorithm}[t]
\caption{Edge-removal step of the PC algorithm for BNs}\label{alg:edgremoval}
    \footnotesize\For{$i\gets 0$ \KwTo $|V_H|-2$}{
        \While{possible}{
            Select any ordered pair of nodes $u$ and $v$ in $H$ such that $u\in ad_H(v)$, $|ad_H(u)\setminus v|\ge i$ ($ad_H(x):=\{y\in V| x\to y, y\to x, \textrm{ or }x-y\}$)\;
            \If{\textrm{there exists $S\subseteq (ad_H(u)\setminus v)$ s.t. $|S|=i$ and $u\perp\!\!\!\perp_p v|S$ (i.e., $u$ is independent of $v$ given $S$ in the probability distribution $p$)}}{
                Set $S_{uv} = S_{vu} = S$\;
                Remove the edge $u - v$ from $H$\;
            }
        }
    }
\end{algorithm}

Since the PC algorithm only looks at adjacencies of $u$ and $v$ in the current stage of the algorithm, rather than all possible subsets, the PC algorithm performs fewer independence tests compared to the IC algorithm.
The complexity
of the PC algorithm for DAGs is difficult to evaluate exactly, but with the \textit{sparseness assumption} the worst case is with high probability bounded by $O(n^q)$, where $n$ is the number of vertices and $q$ is the maximum number of the adjacent vertices of the true underlying DAG \citep{Kalisch07}. Our main intuition is that replacing the skeleton recovery phase in the IC-like algorithm with a PC-like approach will speed up this phase and make it computationally scalable when the true underlying LWF CG is sparse (see the skeleton recovery phase of Algorithm \ref{alg:lwfopc}).

\begin{algorithm}[t]
\caption{PC-like algorithm for LWF CGs}\label{alg:lwfopc}
	\SetAlgoLined
	\small\KwIn{a set $V$ of nodes and a probability distribution $p$ faithful to an unknown LWF CG $G$.}
	\KwOut{The pattern of $G$.}
    Let $H$ denote the complete undirected graph over $V$\;
    \small\tcc{Skeleton Recovery}
\For{$i\gets 0$ \KwTo $|V_H|-2$}{
        \While{possible}{
            Select any ordered pair of nodes $u$ and $v$ in $H$ such that $u\in ad_H(v)$ and $|ad_H(u)\setminus v|\ge i$\;
            \If{\textrm{there exists $S\subseteq (ad_H(u)\setminus v)$ s.t. $|S|=i$ and $u\perp\!\!\!\perp_p v|S$ (i.e., $u$ is independent of $v$ given $S$ in the probability distribution $p$)}}{
                Set $S_{uv} = S_{vu} = S$\;
                Remove the edge $u - v$ from $H$\;
            }
        }
    }
    \small\tcc{Complex Recovery from \citep{mxg}}
    Initialize $H^*=H$\;
    \For{\textrm{each vertex pair $\{u,v\}$ s.t. $u$ and $v$ are not adjacent in $H$}}{
        \For{\textrm{each $u-w$ in $H^*$}}{
            \If{$u\not\perp\!\!\!\perp_p v|(S_{uv}\cup \{w\})$}{
                Orient $u - w$ as $u\to w$ in $H^*$\;
            }
        }
    }
    Take the pattern of $H^*$\;
    \small\tcc{\parbox[t]{.97\linewidth}{To get the pattern of $H^*$ in line 19, at each step, we consider a pair of candidate complex arrows $u_1 \to w_1$ and $u_2\to w_2$ with $u_1 \ne u_2$, then we check whether there is an undirected path from $w_1$ to $w_2$ such that none of its intermediate vertices is adjacent to either $u_1$ or $u_2$. If there exists such a path, then $u_1 \to w_1$ and $u_2\to w_2$ are labeled (as complex arrows). We repeat this procedure until all possible candidate pairs are examined. The pattern is then obtained by removing directions of all unlabeled as complex arrows in $H^*$ \citep{mxg}.}}
\end{algorithm}

The looping procedure of the IC-like algorithm for complex recovery is computationally expensive. We use a polynomial time approach similar to the proposed algorithm by \citep{mxg} to reduce the computational cost of the complex recovery (see the complex recovery phase of Algorithm \ref{alg:lwfopc}). 
Finally, the IC-like algorithm uses three basic
rules, namely the \textit{transitivity rule}, the \textit{necessity rule}, and the \textit{double-cycle rule}, for changing the obtained pattern in the previous phase into the corresponding largest CG (see \citep{studeny97} for details). 
When we have perfect conditional independence, both IC-like and LCD algorithms recover the structure of the model correctly if the probability distribution of the data is \textit{faithful} to some LWF CGs i.e., all conditional independencies among variables can be represented by an LWF CG.
The entire process is formally described in Algorithm \ref{alg:lwfopc}. The correctness of Algorithm \ref{alg:lwfopc}  is proved in  Appendix \ref{appendixA}. 

\textbf{Computational Complexity Analysis of Algorithm \ref{alg:lwfopc}.} The complexity of the algorithm for a graph $G$ is bounded by the largest degree in $G$. Let
$k$ be the maximal degree of any vertex and let $n$ be the number of vertices. Then in the
worst case the number of conditional independence tests required by the algorithm is
bounded by
$$2\binom{n}{2}\sum_{i=0}^k\binom{n-2}{i}\le \frac{n^2(n-2)^k}{(k-1)!}$$ To derive the inequality, use induction
on $k$ \cite[p. 552]{Neapolitan}. So, Algorithm \ref{alg:lwfopc} has a worst-case running time of $O(n^{k+2})$.
This is a loose upper bound even in the worst case; it assumes that in the worst case for $n$
and $k$, no two variables are \textit{c}-separated by a set of less than cardinality $k$, and for many values of $n$ and $k$ we have been unable to find graphs with that property. The worse case is rare, and the average number of conditional independence tests required for graphs of maximal
degree $k$ is much smaller. In practice it is possible to recover sparse graphs with as many
as a hundred variables as shown in section \ref{evaluation}.

\section{STABLE PC-LIKE ALGORITHM}\label{sec:SPSLWF}
In this section, we show that the PC-like algorithm proposed in the previous section is order dependent,
in the sense that the output can depend on the order in which the variables are given. Proof of theorems in this section can be found in Appendix \ref{appendixB}.

In applications, we do not have perfect conditional independence information.
Instead, we assume that we have an i.i.d. sample of size $n$ of variables $V = (X_1,\dots,X_p)$. In the PC-like algorithm all conditional independence queries are estimated by statistical conditional independence tests at some pre-specified significance level (p value) $\alpha$. For example, if the distribution of $V$ is multivariate Gaussian, one can test for zero partial correlation, see, e.g., \citep{Kalisch07}. Hence, we use the $\mathsf{gaussCItest()}$ function from the R package \href{https://cran.r-project.org/web/packages/pcalg}{$\mathsf{pcalg}$} throughout this paper. Let order($V$) denote an ordering on the variables in $V$. We now consider the role of
order($V$) in every step of the Algorithm \ref{alg:lwfopc}.

In the skeleton recovery phase of the PC-like algorithm, the order of variables affects the estimation of the skeleton and the separating sets. In particular, as noted for the special case of BNs in~\citep{Colombo2014}, for each level of $i$, the order of variables determines the order in which pairs of adjacent
vertices and subsets $S$ of their adjacency sets are considered (see lines 4 and 5 in Algorithm \ref{alg:lwfopc}). The skeleton $H$ is updated after each edge removal. Hence, the adjacency sets typically change within one level of $i$, and this affects which other conditional independencies are
checked, since the algorithm only conditions on subsets of the adjacency sets. When we have perfect conditional independence information,  all orderings on the variables lead to the same output. In the sample version, however, we typically make
mistakes in keeping or removing edges, because conditional independence relationships have to be estimated from data. In such cases, the resulting changes in the adjacency
sets can lead to different skeletons, as illustrated in Example \ref{ex1OrderDepLWF}.

Moreover, different variable orderings can lead to different separating sets in the skeleton recovery phase.
When we have perfect conditional independence information, this is not important, because any valid separating set leads to the
correct \textit{U}-structure decision in the complex recovery phase. In the sample version, however, different separating
sets in the skeleton recovery phase may yield different decisions about \textit{U}-structures in the complex recovery phase.
This is illustrated in Example \ref{ex2OrderDepLWF}.

\begin{example}\textbf{(Order dependent skeleton of the  PC4LWF algorithm.)}\label{ex1OrderDepLWF}
Suppose that the distribution of $V = \{a,b,c,d,e\}$ is faithful to the DAG in Figure
\ref{fig:OrderDepex1LWF}(a). This DAG encodes the following conditional independencies with minimal separating sets: $a\perp\!\!\!\perp d|\{b,c\}$ and $a\perp\!\!\!\perp e|\{b,c\}$.
Suppose that we have an i.i.d. sample of $(a,b,c,d,e)$, and that the following
conditional independencies with minimal separating sets are judged to hold at some significance level $\alpha$: $a\perp\!\!\!\perp d|\{b,c\}$, $a\perp\!\!\!\perp e|\{b,c,d\}$, and $c\perp\!\!\!\perp e|\{a,b,d\}$. Thus, the first two are correct, while the third is false.

We now apply the skeleton recovery phase of the PC-like algorithm with two different orderings: $\textrm{order}_1(V)=(d,e,a,c,b)$ and $\textrm{order}_2(V)=(d,c,e,a,b)$. The resulting skeletons are shown in Figures \ref{fig:OrderDepex1LWF}(b) and \ref{fig:OrderDepex1LWF}(c), respectively. 
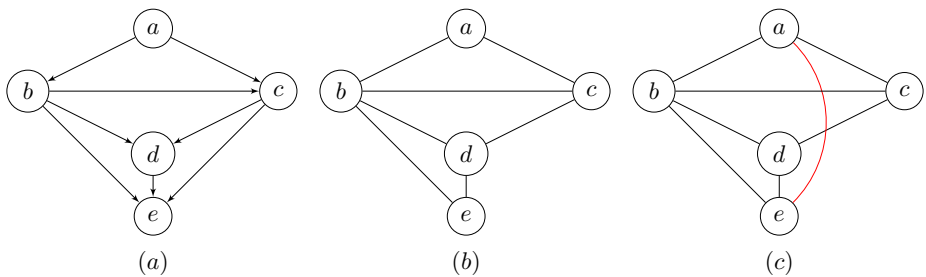
\begin{figure}[!htpb]
    \centering
	\[\resizebox{.8\textwidth}{!}{\begin{tikzpicture}[transform shape]
	\tikzset{vertex/.style = {shape=circle,draw,minimum size=1em}}
	\tikzset{edge/.style = {->,> = latex'}}
	\node[vertex] (o) at  (0,1) {$e$};
	\node[vertex] (p) at  (0,2) {$d$};
	\node[vertex] (q) at  (0,4) {$a$};
	\node[vertex] (r) at  (-2,3) {$b$};
	\node[vertex] (s) at  (2,3) {$c$};
	\node (t) at (0,0.25) {$(a)$};
	\draw[edge] (q) to (r);
	\draw[edge] (q) to (s);
	\draw[edge] (r) to (s);
	\draw[edge] (r) to (p);
	\draw[edge] (r) to (o);
	\draw[edge] (s) to (p);
	\draw[edge] (s) to (o);
	\draw[edge] (p) to (o);
	
	\node[vertex] (i) at  (5,1) {$e$};
	\node[vertex] (j) at  (5,2) {$d$};
	\node[vertex] (k) at  (5,4) {$a$};
	\node[vertex] (l) at  (3,3) {$b$};
	\node[vertex] (m) at  (7,3) {$c$};
	\node (n) at (5,0.25) {$(b)$};
	\draw (j) to (l);
	\draw (j) to (i);
	\draw (i) to (l);
	\draw (k) to (m);
	\draw (k) to (l);
	\draw (j) to (m);
	\draw (l) to (m);
	
	\node[vertex] (e) at  (10,1) {$e$};
	\node[vertex] (d) at  (10,2) {$d$};
	\node[vertex] (a) at  (10,4) {$a$};
	\node[vertex] (b) at  (8,3) {$b$};
	\node[vertex] (c) at  (12,3) {$c$};
	\node (f) at (10,0.25) {$(c)$};
	\draw (c) to (d);
	\draw (e) to (d);
	\draw (e) to (b);
	\draw[red] (a) to [out=315,in=45] (e);
	\draw (b) to (c);
	\draw (a) to (b);
	\draw (a) to (c);
	\draw (b) to (d);
	\end{tikzpicture}}\]
    \caption{(a) The DAG $G$, (b) the skeleton returned by  Algorithm \ref{alg:lwfopc} with $\textrm{order}_1(V)$, (c) the skeleton returned by  Algorithm \ref{alg:lwfopc} with $\textrm{order}_2(V)$. 
}
    \label{fig:OrderDepex1LWF}
\end{figure}

We see that the skeletons are different, and that both are incorrect as the edge $c\erelbar{00}e$ is missing. The skeleton for $\textrm{order}_2(V)$ contains an additional
error, as there is an additional edge $a\erelbar{00}e$. We now go through Algorithm \ref{alg:lwfopc} to see what happened. We start with a complete undirected graph on $V$. When $i= 0$, variables are tested for marginal independence, and the algorithm correctly does not remove any edge. Also, when $i=1$, the algorithm correctly does not remove any edge. When $i= 2$, there is a pair of vertices that is thought to be conditionally independent given a subset of size two, and the
algorithm correctly removes the edge between $a$ and $d$. When $i=3$, there are two pairs of vertices that are thought to be conditionally independent given a subset of size three. Table \ref{t1OrderDepLWFex1}
shows the trace table of Algorithm \ref{alg:lwfopc} for $i=3$ and $\textrm{order}_1(V)=(d,e,a,c,b)$. Table \ref{t2OrderDepMVRex1}
shows the trace table of Algorithm \ref{alg:lwfopc} for $i=3$ and $\textrm{order}_2(V)=(d,c,e,a,b)$.
\begin{table}
\caption{The trace table of Algorithm \ref{alg:lwfopc} for $i=3$ and $\textrm{order}_1(V)=(d,e,a,c,b)$.}\label{t1OrderDepLWFex1}
\centering
\begin{tabular}{c|c|c|c|c}
 Ordered &  & &Is $S_{uv}\subseteq$ & Is $u\erelbar{00} v$ \\
 Pair $(u,v)$ & $ad_H(u)$ & $S_{uv}$ & $ad_H(u)\setminus\{v\}$? & removed?\\
\midrule
\midrule
   $(e,a)$ & $\{a,b,c,d\}$&$\{b,c,d\}$&	Yes&	Yes\\
    \midrule
  $(e,c)$  &$\{b,c,d\}$&$\{a,b,d\}$&	No&	No \\
\midrule
$(c,e)$ &$\{a,b,d,e\}$ &$\{a,b,d\}$&Yes&	Yes\\
\bottomrule
\end{tabular}
\end{table}
\begin{table}
\caption{The trace table of Algorithm \ref{alg:lwfopc} for $i=3$ and $\textrm{order}_2(V)=(d,c,e,a,b)$.}
\centering
\begin{tabular}{c|c|c|c|c}
 Ordered &  & &Is $S_{uv}\subseteq$ & Is $u\erelbar{00} v$ \\
 Pair $(u,v)$ & $ad_H(u)$ & $S_{uv}$ & $ad_H(u)\setminus\{v\}$? & removed?\\
\midrule
\midrule
   $(c,e)$ &$\{a,b,d,e\}$ &$\{a,b,d\}$&Yes&	Yes\\ 
    \midrule
  $(e,a)$ & $\{a,b,d\}$&$\{b,c,d\}$&	No&	No\\
    \midrule
  $(a,e)$  &$\{b,c,e\}$&$\{b,c,d\}$&No&No\\
\bottomrule
\end{tabular}\label{t2OrderDepMVRex1}
\end{table}
\end{example}
\begin{example}\textbf{(Order dependent separating sets and \textit{U}-structures of the PC4LWF algorithm.)}\label{ex2OrderDepLWF}
Suppose that the distribution of $V = \{a,b,c,d,e\}$ is faithful to the DAG $G$ in Figure \ref{fig:OrderDepex2LWF}(a). DAG $G$ encodes the following conditional independencies with minimal separating sets: $a\perp\!\!\!\perp d|b, a\perp\!\!\!\perp e|\{b,c\}, a\perp\!\!\!\perp e|\{c,d\}, b\perp\!\!\!\perp c, b\perp\!\!\!\perp e|d,$ and $c\perp\!\!\!\perp d$.
Suppose that we have an i.i.d. sample of $(a,b,c,d,e)$. Assume that all true conditional independencies are judged to hold except $c\perp\!\!\!\perp d$. Suppose that $c\perp\!\!\!\perp d|b$ and $c\perp\!\!\!\perp d|e$ are thought to hold. Thus, the first is correct, while the second is false. We now apply the complex recovery phase of Algorithm \ref{alg:lwfopc} with two different orderings: $\textrm{order}_1(V)=(d,c,b,a,e)$ and $\textrm{order}_3(V)=(c,d,e,a,b)$. The resulting CGs are shown in Figures \ref{fig:OrderDepex2LWF}(b) and \ref{fig:OrderDepex2LWF}(c), respectively. Note that while the separating set for vertices $c$ and $d$ with $\textrm{order}_1(V)$ is $S_{dc}=S_{cd}=\{b\}$, the separating set for them with $\textrm{order}_2(V)$ is $S_{cd}=S_{dc}=\{e\}$. This illustrates that order dependent separating sets in the skeleton recovery phase of the sample version of Algorithm \ref{alg:lwfopc} can lead to order dependent \textit{U}-structures.
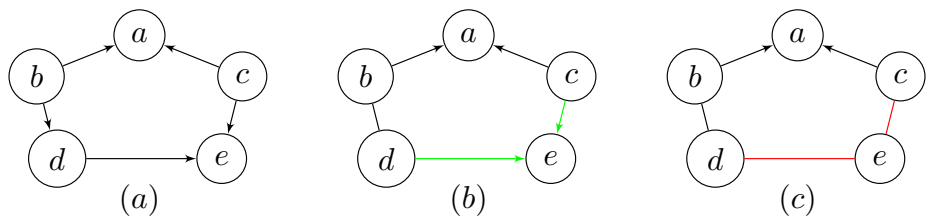
\begin{figure}[ht]
    \centering
	\[\resizebox{.8\textwidth}{!}{\begin{tikzpicture}[transform shape]
	\tikzset{vertex/.style = {shape=circle,draw,minimum size=1em}}
	\tikzset{edge/.style = {->,> = latex'}}
	\node[vertex] (o) at  (1,.5) {$e$};
	\node[vertex] (p) at  (-1,.5) {$d$};
	\node[vertex] (q) at  (0,2) {$a$};
	\node[vertex] (r) at  (-1.25,1.5) {$b$};
	\node[vertex] (s) at  (1.25,1.5) {$c$};
	\node (t) at (0,0) {$(a)$};
	\draw[edge] (r) to (q);
	\draw[edge] (s) to (q);
	\draw[edge] (r) to (p);
	\draw[edge] (s) to (o);
	\draw[edge] (p) to (o);
	
	\node[vertex] (i) at  (5,.5) {$e$};
	\node[vertex] (j) at  (3,.5) {$d$};
	\node[vertex] (k) at  (4,2) {$a$};
	\node[vertex] (l) at  (2.75,1.5) {$b$};
	\node[vertex] (m) at  (5.25,1.5) {$c$};
	\node (n) at (4,0) {$(b)$};
	\draw[edge] (l) to (k);
	\draw[edge,green] (j) to (i);
	\draw[edge] (m) to (k);
	\draw[edge,green] (m) to (i);
	\draw (j) to (l);
	
	\node[vertex] (e) at  (9,.5) {$e$};
	\node[vertex] (d) at  (7,.5) {$d$};
	\node[vertex] (a) at  (8,2) {$a$};
	\node[vertex] (b) at  (6.75,1.5) {$b$};
	\node[vertex] (c) at  (9.25,1.5) {$c$};
	\node (f) at (8,0) {$(c)$};
	\draw[edge] (b) to (a);
	\draw[edge] (c) to (a);
	\draw[red] (e) to (d);
	\draw[red] (e) to (c);
	\draw (b) to (d);
	\end{tikzpicture}}\]
    \caption{(a) The DAG $G$, (b) the CG returned after the complex recovery phase of  Algorithm \ref{alg:lwfopc} with $\textrm{order}_1(V)$, (c) the CG returned after the complex recovery phase of  Algorithm \ref{alg:lwfopc} with $\textrm{order}_3(V)$.}\label{fig:OrderDepex2LWF}
\end{figure}
\end{example}

We now propose several modifications of the original PC-like algorithm for learning LWF chain graphs (and hence also of the related algorithms) that remove the order dependence in the various stages of the algorithm, analogously to what ~\citep{Colombo2014} did for the original PC algorithm in the case of DAGs. 
\subsection{Order Independent Skeleton Recovery}
We first consider estimation of the skeleton in the adjacency search of the PC4LWF algorithm. The pseudocode for our modification is given in Algorithm \ref{alg:lwfspc}. The resulting algorithm is called \textit{\textit{SPC4LWF} (stable PC-like for LWF CGs)}.
The main difference between Algorithms \ref{alg:lwfopc} and \ref{alg:lwfspc} is given by the for-loop on lines
3-5 in the latter one, which computes and stores the adjacency sets $a_H(v_i)$ of all variables after each new size $i$ of the conditioning sets. These stored adjacency sets $a_H(v_i)$ are used
whenever we search for conditioning sets of this given size $i$. Consequently, an edge deletion on line 10 no longer affects which conditional independencies are checked for other pairs of
variables at this level of $i$.
In other words, at each level of $i$, Algorithm \ref{alg:lwfspc} records which edges should be removed, but for the purpose of the adjacency sets it removes these edges only when it goes to the
next value of $i$. Besides resolving the order dependence in the estimation of the skeleton,
our algorithm has the advantage that it is easily parallelizable at each level of $i$  i.e., computations required for $i$-level can be performed in parallel. As a result, the runtime of the parallelized stable PC-like algorithm is much shorter than the original PC-like algorithm for learning LWF chain graphs. Furthermore, this approach enjoys the advantage of knowing the number of CI tests of each level in advance. This allows the CI tests to be evenly distributed over different cores, so that the parallelized algorithm can achieve maximum possible speedup.
The stable PC-like is 
correct, i.e. it returns an LWF CG to which the given probability distribution is faithful (Theorem \ref{thm:correctPCstableLWF}), and it yields order independent skeletons in the sample version (Theorem \ref{thm:stableskeletonLWF}). We illustrate the algorithm in Example \ref{ex:stableskeletonsLWF}.
\begin{algorithm}[t]
\caption{The order independent (stable) PC-like algorithm for learning LWF CGs.}\label{alg:lwfspc}
	\SetAlgoLined
	\small\KwIn{A set $V$ of nodes and a probability distribution $p$ faithful to an unknown LWF CG $G$ and an ordering order($V$) on the variables.}
	\KwOut{The pattern of G}
    Let $H$ denote the complete undirected graph over $V=\{v_1,\dots,v_n\}$\;
\tcc{Skeleton Recovery}
\For{$i\gets 0$ \KwTo $|V_H|-2$}{
\tikzmk{A}
    \For{$j\gets 1$ \KwTo $|V_H|$}{
        Set $a_H(v_j)=ad_H(v_j)$\;
    }\tikzmk{B}\boxit{green!50}
        \While{possible}{
            Select any ordered pair of nodes $u$ and $v$ in $H$ such that $u\in a_H(v)$ and $|a_H(u)\setminus v|\ge i$\, using order($V$);
            
            \If{\textrm{there exists $S\subseteq (a_H(u)\setminus v)$ s.t. $|S|=i$ and $u\perp\!\!\!\perp_p v|S$ (i.e., $u$ is independent of $v$ given $S$ in the probability distribution $p$)}}{
                Set $S_{uv} = S_{vu} = S$\;
                Remove the edge $u \erelbar{00} v$ from $H$\;
            }
        }
    }
    \tcc{Complex Recovery and orientation rules}
    Follow the same procedures in Algorithm \ref{alg:lwfopc} (lines: 11-19). 
\end{algorithm}
\begin{restatable}{theorem}{firstthm}\label{thm:correctPCstableLWF}
    Let the distribution of $V$ be faithful to an LWF CG $G$, and assume that we are given perfect conditional independence information about all pairs of variables $(u,v)$ in $V$ given subsets $S\subseteq V\setminus \{u,v\}$. Then the output of the stable PC-like algorithm is the pattern of $G$.
\end{restatable}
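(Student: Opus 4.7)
The plan is to reduce the theorem to the correctness of Algorithm~\ref{alg:lwfopc}, already established in Appendix~\ref{appendixA}. Algorithm~\ref{alg:lwfspc} departs from Algorithm~\ref{alg:lwfopc} only in the skeleton phase, and there the departure consists solely of freezing the adjacency sets $a_H(\cdot)$ at the start of each level $i$; the complex-recovery block (lines 11--19) is invoked verbatim. Under perfect conditional independence, any valid c-separator of a non-adjacent pair yields the correct $U$-structure decisions in that block, so I would focus entirely on showing that the stable skeleton phase (i) returns the skeleton of $G$, and (ii) stores, for each removed edge $u\erelbar{00}v$, a set $S_{uv}$ that c-separates $u$ from $v$ in $G$.

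Property (ii) is immediate from faithfulness: the algorithm removes an edge only after exhibiting $S$ with $u \perp\!\!\!\perp_p v \mid S$, and faithfulness converts this into c-separation in $G$. For property (i), I would establish the invariant $ad_G(u) \subseteq a_H(u) \subseteq ad_H(u)$ for every vertex $u$ throughout the skeleton phase. The right-hand inclusion holds by the freeze rule, since $a_H(u)$ is set to $ad_H(u)$ at the start of level $i$ and $ad_H(u)$ only shrinks as edges are deleted. The left-hand inclusion holds because faithfulness forbids removing an edge between two truly adjacent vertices of $G$, so no true adjacency is ever destroyed.

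For the other direction, fix a pair $u,v$ non-adjacent in $G$. The separator lemma underpinning the correctness of Algorithm~\ref{alg:lwfopc} yields a c-separator $S$ with $S \subseteq ad_G(u) \setminus \{v\}$ (or symmetrically $S \subseteq ad_G(v)\setminus\{u\}$). At iteration $i = |S|$, the invariant gives $S \subseteq a_H(u) \setminus \{v\}$ and $|a_H(u)\setminus v| \ge i$, so this $S$ -- or some other size-$i$ witness of c-separation inside $a_H(u)\setminus\{v\}$ -- is tested and the edge $u\erelbar{00}v$ is deleted. The crucial point here is that, because $a_H$ is frozen for the duration of level $i$, no prior edge removal at the same level can shrink $a_H(u)$ and thereby hide the witness, which is precisely what caused the order dependence in Algorithm~\ref{alg:lwfopc}. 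Consequently, the final skeleton matches that of $G$.

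I expect the main obstacle to be verifying that the LWF-CG separator lemma used above -- the analogue for LWF CGs of the DAG fact that non-adjacent vertices are d-separated by a subset of the parents of one endpoint -- is available in the exact form required. If it is not packaged in Appendix~\ref{appendixA}, I would derive it from the global Markov property applied to $(G_{An(\{u,v\}\cup bd_G(u))})^m$; this is routine but must be phrased in terms of boundaries (parents together with undirected neighbors) rather than parents alone, because of the undirected edges intrinsic to chain graphs.
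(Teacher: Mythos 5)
Your proof takes essentially the same route as the paper: the paper's entire proof of Theorem~\ref{thm:correctPCstableLWF} is the single sentence that it is ``completely analogous'' to the correctness proof of Algorithm~\ref{alg:lwfopc} in Appendix~\ref{appendixA}, and your argument is exactly that reduction, spelled out via the sandwich invariant $ad_G(u)\subseteq a_H(u)\subseteq ad_H(u)$ on the frozen adjacency sets. The obstacle you flag at the end---whether a c-separator of a non-adjacent pair is guaranteed to lie inside the stored adjacency set of one endpoint, given that moralization links co-parents---is a genuine subtlety, but it is one that the paper's own Lemma~\ref{lem1lwfpc} asserts without elaboration, so your treatment is if anything more explicit than the paper's.
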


\begin{restatable}{theorem}{secondthm}\label{thm:stableskeletonLWF}
    The skeleton resulting from the sample version of the stable PC-like algorithm is order independent.
\end{restatable}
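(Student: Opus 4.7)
The strategy is a straightforward induction on the level $i$ of the outer loop, exploiting the fact that Algorithm \ref{alg:lwfspc} freezes the adjacency sets $a_H(v_j)$ at the beginning of each $i$-level (lines 3--5) and uses only these frozen sets, rather than the live sets $ad_H(v_j)$, when deciding which conditional independence tests to perform at that level. Throughout the plan, I fix a sample and thus a deterministic oracle for the CI tests $u \perp\!\!\!\perp_p v \mid S$; the only thing being varied is the ordering order$(V)$.

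First I would formulate the induction hypothesis precisely: let $H_i$ denote the undirected graph present at the moment the outer loop enters iteration $i$ (so $H_0$ is the complete graph). The claim to prove by induction on $i$ is that $H_i$, and consequently the frozen adjacency sets $a_{H_i}(v)$ for every $v$, are identical for all two orderings of $V$. The base case $i = 0$ is immediate, since $H_0$ is the complete graph on $V$ regardless of ordering. For the inductive step, I would fix $i \geq 0$ and assume $H_i$ is order independent, and hence so is each $a_{H_i}(v)$.

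Next I would characterize precisely which edges get deleted during level $i$. An unordered edge $\{u,v\}$ that is still present in $H_i$ is removed at level $i$ if and only if there exists an ordered pair $(x,y) \in \{(u,v),(v,u)\}$ with $|a_{H_i}(x)\setminus \{y\}| \geq i$ and some $S \subseteq a_{H_i}(x)\setminus \{y\}$ with $|S|=i$ and $x \perp\!\!\!\perp_p y \mid S$. The crucial point is that this predicate depends only on the frozen adjacency sets $a_{H_i}(\cdot)$ and on the CI oracle, and \emph{not} on any edge removals performed earlier within the same level, because lines 7 and 8 of Algorithm \ref{alg:lwfspc} reference $a_H(\cdot)$ rather than $ad_H(\cdot)$. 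Hence the set of edges removed during level $i$ is determined entirely by $H_i$, and therefore order independent by the inductive hypothesis; it follows that $H_{i+1}$ is also order independent. Iterating until the loop terminates at $i = |V|-2$ gives that the final skeleton is order independent.

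The only real subtlety, and the step I would write out carefully, is the verification that nothing in lines 6--10 secretly depends on the intra-level ordering. In particular, although the while-loop on line 6 may process the pair $(u,v)$ either as $(u,v)$ first or as $(v,u)$ first depending on order$(V)$, either processing order reaches the same decision to delete or retain $\{u,v\}$, because the existential statement over separating sets above is symmetric in the two orderings and both branches use the same frozen $a_{H_i}$. I expect this bookkeeping check to be the main (though still mild) obstacle; once it is settled, the induction closes and the theorem follows. Note that, unlike Theorem \ref{thm:correctPCstableLWF}, no appeal to faithfulness is needed here: order independence of the skeleton is a purely combinatorial consequence of freezing the adjacency sets.
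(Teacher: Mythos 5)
Your proof is correct and takes essentially the same approach as the paper: both arguments rest on the observation that the frozen adjacency sets $a_H(\cdot)$, rather than the live sets $ad_H(\cdot)$, determine which conditional independence tests are performed at each level, so within-level edge deletions cannot influence later decisions at that level. The only difference is that you make explicit the induction over levels $i$ (needed to justify that the frozen sets themselves agree across orderings), which the paper's proof leaves implicit.
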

\begin{example}[Order independent skeletons]\label{ex:stableskeletonsLWF}
We go back to Example \ref{ex1OrderDepLWF}, and consider the sample version of Algorithm \ref{alg:lwfspc}. The algorithm now outputs the skeleton shown in Figure \ref{fig:OrderDepex1LWF}(b) for both orderings $\textrm{order}_1(V)$ and $\textrm{order}_2(V)$. We again go through the algorithm step by step. We start with a complete undirected
graph on $V$. No conditional independence found when $i=0$. Also, when $i=1$, the algorithm correctly does not remove any edge. When $i= 2$, the algorithm first computes the new adjacency sets: $a_H(v)=V\setminus\{v\}, \forall v\in V$. There is a pair of variables that is thought to be conditionally independent given a subset of size two, namely $(a,d)$. Since the sets $a_H(v)$ are not updated after
edge removals, it does not matter in which order we consider the ordered pair. Any ordering leads to the removal of edge between $b$ and $c$.  When $i= 3$, the algorithm first computes the new adjacency sets: $a_H(a)=a_H(d)=\{b,c,e\}$ and $a_H(v)=V\setminus\{v\}, \textrm{ for } v=b,c,e$. There are two pairs of variables that are thought to be conditionally independent given a subset of size three, namely $(a,e)$ and $(c,e)$. Since the sets $a_H(v)$ are not updated after
edge removals, it does not matter in which order we consider the ordered pair. Any ordering leads to the removal of both edges $a\erelbar{00}e$ and $c\erelbar{00}e$. 
\end{example}

\subsection{Order Independent Complex Recovery}
We propose two methods to resolve the order dependence  in the determination of the minimal complexes in LWF CGs, by extending the proposed approaches in \citep{Ramsey:2006} and \citep{Colombo2014} for unshielded colliders recovery in DAGs.

The \textbf{Conservative PC-like algorithm  (CPC4LWF algorithm)} works as follows. Let $H$ be the undirected graph resulting from the skeleton recovery phase  of Algorithm \ref{alg:lwfopc}. For each vertex pair $\{u,v\}$ s.t. $u$ and $v$ are not adjacent in $H$, determine all subsets $S$ of $ad_H(u)$ that make $u$ and $v$ conditionally independent, i.e., that
satisfy $u\perp\!\!\!\perp_p v|S$. We refer to such sets as separating sets. The undirected edge $u\erelbar{00}w$ is labelled as \textit{unambiguous} if at least one such separating set is found and either for each $S$ the set $S\cup\{w\}$ $c$-separates $u$ from $v$ or for none of them $S\cup\{w\}$ $c$-separates $u$ from $v$; otherwise it is labelled as \textit{ambiguous}. If $u\erelbar{00}w$ is
unambiguous, it is oriented as $u\erelbar{01}w$ if and only if for none of the separating
sets $S$, $S\cup\{w\}$ $c$-separates $u$ from $v$. Moreover, in the complex recovery phase of Algorithm \ref{alg:lwfopc}, lines 3-11, the orientation rule is
adapted so that only unambiguous undirected edges are oriented. The output of the CPC4LWF algorithm
is a chain graph in which ambiguous undirected edges are marked. 
We refer to the combination of the SPC4LWF and CPC4LWF algorithms as the \textit{stable CPC4LWF algorithm}.

In the case of DAGs, the authors of \citep{Colombo2014} found that the CPC-algorithm can be very conservative, in the sense that very few
unshielded triples ($v$-structures) are unambiguous in the sample version,  where conditional independence relationships have to be estimated from
data. They proposed a minor
modification of the CPC approach, called \textit{Majority rule PC algorithm (MPC)} to mitigate the (unnecessary) severity of CPC approach.  We similarly propose the \textbf{Majority rule PC-like algorithm (MPC4LWF)} for LWF CGs. As in the CPC4LWF algorithm, we
first determine all subsets $S$ of $ad_H(u)$ that make non adjacent vertices $u$ and $v$ conditionally independent, i.e., that
satisfy $u\perp\!\!\!\perp_p v|S$. The undirected edge $u\erelbar{00}w$ is labelled as \textit{($\alpha, \beta$)-unambiguous} if at least one such separating set is found
or no more than $\alpha$\% or no less than $\beta$\% of sets $S\cup\{w\}$ $c$-separate $u$ from $v$, for $0 \le \alpha\le \beta \le 100$. Otherwise it is labelled as \textit{ambiguous}. (As an example, consider $\alpha = 30$ and $\beta = 60$.) If an undirected edge $u\erelbar{00}w$ is unambiguous, it is oriented as $u\erelbar{01}w$ if and only if less than $\alpha$\% of the sets $S\cup\{w\}$  $c$-separate $u$ from $v$. As in the CPC4LWF algorithm, the orientation rule in the complex recovery phase of the PC4LWF algorithm (Algorithm \ref{alg:lwfopc}, lines 11-18) is adapted so that only unambiguous undirected edge $u\erelbar{00}w$ are oriented, and the output is a chain graph in
which ambiguous undirected edge $u\erelbar{00}w$ are marked.
Note that the CPC4LWF algorithm is the special case of the MPC4LWF algorithm with $\alpha = 0$ and $\beta = 100$.
We refer to the combination of the SPC4LWF and MPC4LWF algorithms as the \textit{stable MPC4LWF algorithm}. Table \ref{Relations:modifiedalgs} summarize our results in section \ref{sec:SPSLWF}.
\begin{table}[t]
\footnotesize\caption{Order dependence issues and corresponding modifications of the PC-like algorithm for LWF CGs
that remove the problem. ``Yes" indicates that the corresponding aspect
of the graph is estimated order independently in the sample version.}\label{Relations:modifiedalgs}
\centering
\resizebox{!}{.065\textheight}{
\begin{tabular}{c|c|c}
 & skeleton recovery & complex recovery \\
\midrule
\midrule
PC-like for LWF CGs & No & No\\
\midrule
stable PC-like for LWF CGs  & Yes & No\\
\midrule
stable CPC/MPC-like for LWF CGs  & Yes & Yes \\
\bottomrule
\end{tabular}}
\end{table}
\begin{restatable}{theorem}{thirdthm}\label{thm:correctCPCstableLWF}
    Let the distribution of $V$ be faithful to an LWF CG $G$, and assume that we are given perfect conditional independence information about all pairs of variables $(u,v)$ in $V$ given subsets $S\subseteq V\setminus \{u,v\}$. Then the output of the (stable) CPC/MPC-like algorithm is the pattern of $G$.
\end{restatable}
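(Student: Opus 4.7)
The plan is to reduce the statement to the correctness of Algorithm \ref{alg:lwfopc} (proved in Appendix \ref{appendixA}) together with Theorem \ref{thm:correctPCstableLWF}. The argument has two parts: (i) show that the skeleton recovered by the (stable) CPC/MPC-like algorithm is exactly the skeleton of $G$, and (ii) show that under perfect conditional independence information, no undirected edge is labelled ambiguous in the complex recovery phase, so that the CPC/MPC orientation rule coincides with the original rule of \citep{mxg} used inside Algorithm \ref{alg:lwfopc}.

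Part (i) is immediate: the skeleton recovery phase of the CPC4LWF/MPC4LWF algorithms (and of their stable variants) is identical to that of Algorithm \ref{alg:lwfopc} (respectively Algorithm \ref{alg:lwfspc}), so the correctness proof in Appendix \ref{appendixA} (together with Theorem \ref{thm:correctPCstableLWF} in the stable case) guarantees that the resulting undirected graph $H$ coincides with the skeleton of $G$ and that the sets $S_{uv}$ stored for every non-adjacent pair are genuine $c$-separators of $u$ and $v$ in $G$.

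For part (ii) I would state and prove the following dichotomy lemma: fix any non-adjacent pair $\{u,v\}$ in $H$ and any $w$ with $u\erelbar{00}w$ in $H^*$ and $w\ne v$; under faithfulness and perfect CI information, either every $S\subseteq ad_H(u)\setminus\{v\}$ satisfying $u\perp\!\!\!\perp_p v \mid S$ also satisfies $u\perp\!\!\!\perp_p v\mid S\cup\{w\}$, or no such $S$ does. In other words, the verdict that $u\to w$ is a candidate complex arrow is independent of which valid separating set is used. The lemma is proved via the global Markov property for LWF CGs: $S$ $c$-separates $u$ from $v$ iff $S$ separates them in $(G_{An(\{u,v\}\cup S)})^m$, so whether $S\cup\{w\}$ still $c$-separates reduces to a purely graphical question about moralized ancestral subgraphs of $G$, and its answer is determined by whether $w$ lies on a $c$-connecting path of the specific form that characterises membership of $u\to w$ in a minimal complex of $G$. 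Hence the answer is an intrinsic property of the triple $(u,v,w)$ in $G$, independent of $S$. Given the lemma, no edge is ever labelled ambiguous under perfect CI information, so the CPC4LWF rule orients exactly the arrows that Algorithm \ref{alg:lwfopc} would orient; the same holds for MPC4LWF with any thresholds $0\le\alpha\le\beta\le 100$, since in the absence of ambiguity the two rules collapse. The final pattern-extraction step is identical to that of Algorithm \ref{alg:lwfopc}, so the output is the pattern of $G$ by the correctness of Algorithm \ref{alg:lwfopc}.

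The main obstacle is the rigorous proof of the dichotomy lemma. Intuitively the verdict ``$S\cup\{w\}$ $c$-separates $u$ from $v$'' is controlled by whether $w$ is a collider-like vertex on a minimal $c$-connecting path between $u$ and $v$, which is a structural property of $G$ and does not depend on $S$; but making this precise requires careful bookkeeping of how $An(\{u,v\}\cup S)$ and its moral graph change when $w$ is added to the conditioning set, and in particular how new moral edges can arise through chain components that contain $w$. Once this lemma is established, the remainder of the argument is routine, being inherited directly from Theorem \ref{thm:correctPCstableLWF} and the correctness of the original complex recovery procedure of \citep{mxg}.
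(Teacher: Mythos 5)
Your proposal follows essentially the same route as the paper's own proof: establish skeleton correctness via Theorem \ref{thm:correctPCstableLWF}, then show that under perfect conditional independence information every candidate edge $u\erelbar{00}w$ is unambiguous, so the CPC/MPC orientation rule collapses to the original complex-recovery rule and correctness is inherited from Algorithm \ref{alg:lwfopc}. The ``dichotomy lemma'' you isolate as the main obstacle is exactly the claim the paper disposes of in a single clause (``due to the moralization procedure''), so your sketch is, if anything, more explicit about where the real work lies than the published argument.
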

\begin{restatable}{theorem}{fourththm}\label{thm:stablevstructsLWF}
    The decisions about \textit{U}-structures in the sample version of the stable CPC/MPC-like algorithm are order independent. In addition, the sample versions of stable CPC-like and stable MPC-like algorithms are fully order independent.
\end{restatable}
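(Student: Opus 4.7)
My plan is to reduce both claims to deterministic computations on objects that are already order independent. Let $H$ denote the skeleton returned by the SPC4LWF phase, which is order independent in the sample version by Theorem~\ref{thm:stableskeletonLWF}.

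For the first claim, I would fix a pair $\{u,v\}$ non-adjacent in $H$ and observe that $ad_H(u)$---and hence the collection
\[\mathcal{S}(u,v)=\bigl\{S\subseteq ad_H(u):u\perp\!\!\!\perp_p v\mid S\bigr\}\]
---is a function of $H$ and the CI oracle alone. For each neighbour $w$ of $u$ in $H$, the CPC4LWF rule labels $u\erelbar{00}w$ as unambiguous exactly when the truth value of ``$u\perp\!\!\!\perp_p v\mid S\cup\{w\}$'' is constant as $S$ ranges over $\mathcal{S}(u,v)$, and orients it as $u\to w$ iff that constant value is ``false''. The MPC4LWF rule instead compares the proportion of $S\in\mathcal{S}(u,v)$ for which the query holds against the fixed thresholds $\alpha,\beta$. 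In both variants the label and orientation are deterministic functions of $\mathcal{S}(u,v)$ together with the finite family of CI queries indexed by $(u,v,w)$; critically, neither refers to the particular separating set $S_{uv}$ chosen during skeleton recovery (the source of order dependence in Example~\ref{ex2OrderDepLWF}), nor to the state of previously labelled edges. The set of complex arrows is therefore order independent.

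For the second claim, once the skeleton and the set of complex arrows are determined, the remainder of the procedure---the pattern-extraction step of lines~11--19 of Algorithm~\ref{alg:lwfopc} together with the transitivity, necessity, and double-cycle rules of \citep{studeny97} that convert the pattern into the largest CG---is a purely deterministic graph rewriting whose input is already order independent. No additional CI tests are performed and no loop iterates in an order that depends on a permutation of $V$, so the output chain graph is identical under every variable ordering, which is exactly full order independence.

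The main obstacle will be the first claim: one must verify that ``ambiguity'' in the CPC/MPC schemes really depends only on the static family $\mathcal{S}(u,v)$ defined from the original $H$, rather than on a running, partially oriented graph. This is precisely where the PC4LWF algorithm loses order independence, and the argument must show that quantifying over all $S\in\mathcal{S}(u,v)$ (CPC) or comparing a proportion against fixed thresholds (MPC) removes the dependence on the arbitrary choice of $S_{uv}$. A minor secondary point is the tie case of MPC4LWF when the proportion coincides with $\alpha$ or $\beta$; this is benign because the thresholds are absolute constants applied to a proportion computed on an order-independent set, so the resulting label is itself order independent.
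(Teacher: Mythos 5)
Your proposal is correct and follows essentially the same route as the paper: both arguments reduce the claim to the order independence of the skeleton (Theorem~\ref{thm:stableskeletonLWF}), note that the adjacency sets---and hence the full collection of candidate separating sets used in the CPC/MPC ambiguity and orientation decisions---are therefore order independent, and conclude that the remaining steps are deterministic functions of these order-independent objects. Your write-up merely makes explicit (via the set $\mathcal{S}(u,v)$ and the remark on the deterministic pattern-extraction step) what the paper compresses into ``the rest of the theorem follows straightforwardly.''
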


\section{Evaluation}\label{evaluation}
To investigate the performance of the proposed algorithms, we use the same approach as in \citep{mxg} for
evaluating the performance of the LCD algorithm on Gaussian LWF CGs. We run our algorithms and the LCD algorithm 
on randomly generated LWF CGs and we compare the results and report summary error measures.
We evaluate the performance of the proposed algorithms in terms of the six measurements that are commonly used~\citep{Colombo2014,Kalisch07,mxg,Tsamardinos2006} for constraint-based learning algorithms, and we report on the first five measurements, due to space limits (see Appendix \ref{appendixC} for a more detailed report): (a) the true positive
rate (TPR) (also known as sensitivity, recall, and hit rate), (b) the false positive rate (FPR) (also known as fall-out), (c) the true discovery rate (TDR) (also known as precision or positive predictive value), (d) accuracy (ACC) for the skeleton, (e) the structural Hamming distance (SHD) (this is the metric described in \citep{Tsamardinos2006} to  compare the
structure of the learned and the original graphs), and (f) run-time for the pattern recovery algorithms. In principle, large values of TPR, TDR, and ACC, and small values of FPR and SHD indicate good performance.

\begin{figure}
\thisfloatpagestyle{empty} 
	\centering
	\includegraphics[scale=.45,page=1]{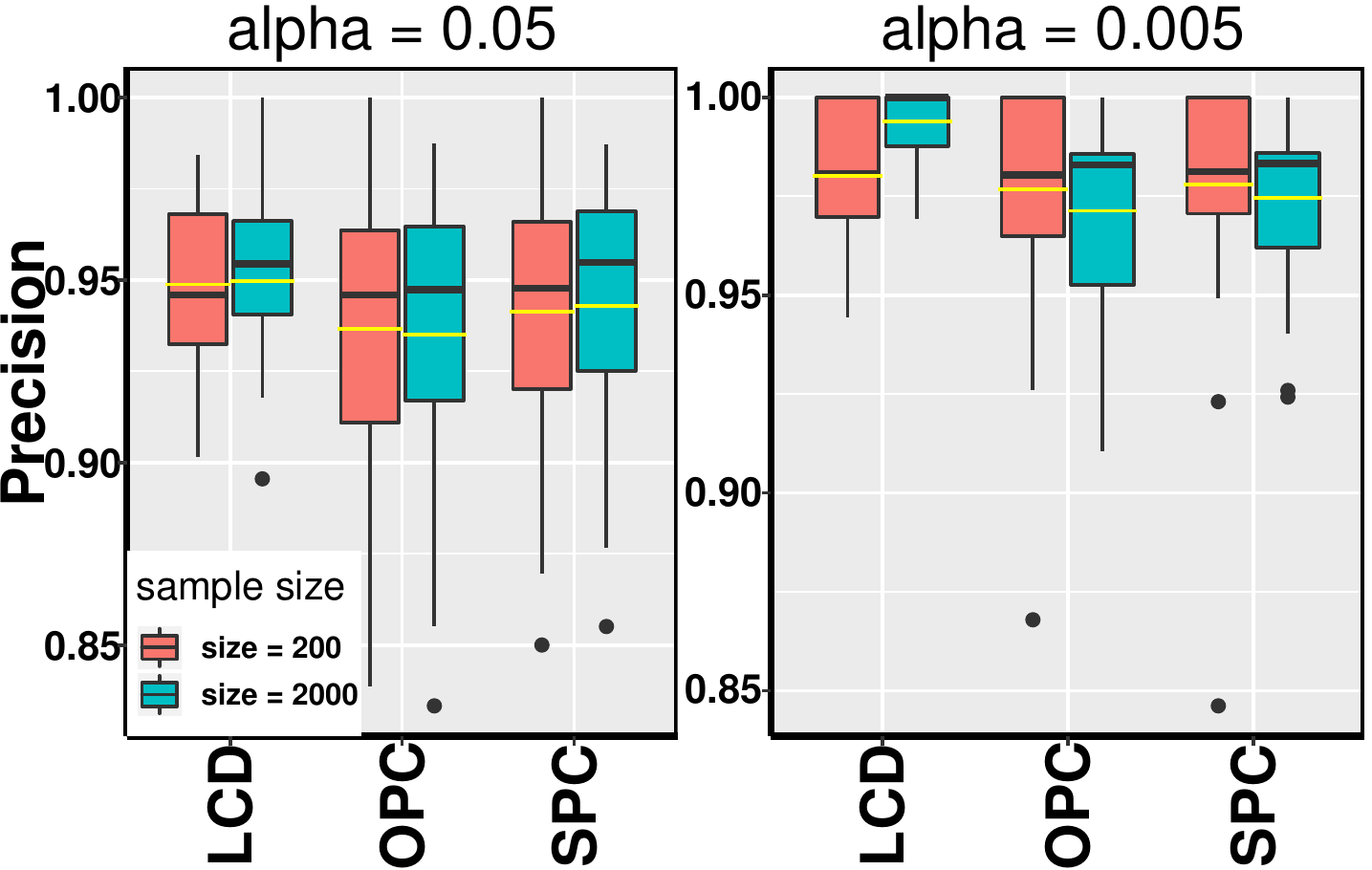}
	\includegraphics[scale=.45,page=1]{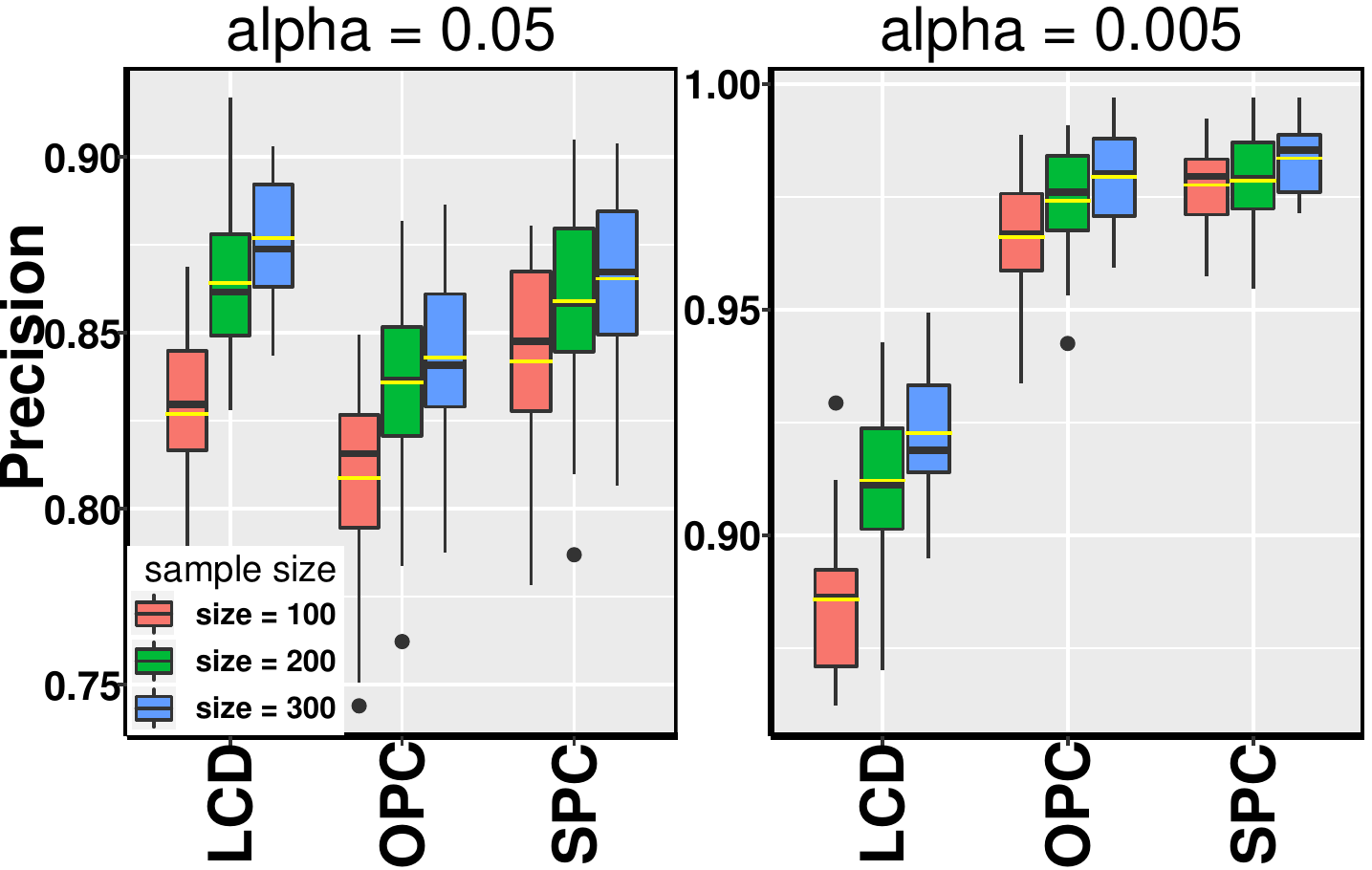}
	\includegraphics[scale=.45,page=2]{images/boxplots503.pdf}
	\includegraphics[scale=.45,page=2]{images/3003boxplots.pdf}
	\includegraphics[scale=.45,page=3]{images/boxplots503.pdf}
	\includegraphics[scale=.45,page=3]{images/3003boxplots.pdf}
	\includegraphics[scale=.45,page=4]{images/boxplots503.pdf}
	\includegraphics[scale=.45,page=4]{images/3003boxplots.pdf}
	\includegraphics[scale=.45,page=5]{images/boxplots503.pdf}
	\includegraphics[scale=.45,page=5]{images/3003boxplots.pdf}
	\caption{Performance of the LCD and PC-like algorithms (original (OPC) and stable (SPC)) for randomly generated Gaussian chain graph models:
	 over 30 repetitions with 50 (the first two columns) and 300 (the last two columns) variables, expected degree N = 3, and significance levels $\alpha=0.05,0.005$.}
	\label{fig:503}
\end{figure}
The first two columns in Figure \ref{fig:503} illustrate the performance of the algorithms in a low dimensional setting with 50 variables and samples of size 200 and 2000.  The last two columns in Figure \ref{fig:503} illustrate the performance of the algorithms in a high-dimensional setting with 300 variables and samples of size 100, 200, and 300.
Figure \ref{fig:503} shows that: \textbf{(a)} in almost all cases, while the performance of the LCD algorithm based on all error measures is better than the performance of the PC-like algorithms in the low-dimensional setting, the performance of the PC-like algorithms (especially the stable PC-like)in the high-dimensional setting are better  than the LCD algorithm (except for the FPR with the p-value $\alpha=0.05$). This indicates that the (stable) PC-like algorithm is computationally feasible and very likely statistically consistent in high-dimensional and sparse setting. \textbf{(b)} The stable PC-like shows similar performance in the low-dimensional setting and improved performance in the high-dimensional setting against the original PC-like algorithm, in particular for error measures precision and FPR (and SHD with the p-value $\alpha=0.05$). \textbf{(c)} In general, the p-value has a very large impact on the performance of the algorithms. Our empirical results suggests that in order to obtain a better precision, FPR, accuracy, and SHD, one can choose a small value (say $\alpha=0.005$) for the significance level of individual tests. \textbf{(d)} While the four error measures TPR, TDR, ACC, and the SHD show a
clear tendency with increasing sample size, the behavior of FPR is not so clear. The latter seems
surprising at first sight but notice that differences are very small with no meaningful indication about the behavior of FPR based on the sample size. 

In summary, empirical simulations show that our proposed algorithms achieve competitive results with the LCD learning algorithm; in particular, in the Gaussian case the SPC4LWF algorithm achieves output of better quality than the LCD and the original PC4LWF algorithm, especially in high-dimensional sparse settings. Besides resolving the order dependence problem,
the SPC4LWF has the advantage that it is easily \textit{parallelizable} and very likely \textit{consistent in high-dimensional settings} (conditions for the consistency would need to be investigated as future work) under the same conditions as the original PC4LWF algorithm.

%
%
%

\appendix
\acks{This work has been supported by AFRL and DARPA (FA8750-16-2-0042). This work is also partially supported by an ASPIRE grant from the Office of the Vice President for Research at the University of South Carolina.}

\section{Correctness of Algorithm \ref{alg:lwfopc}.}\label{appendixA}

Before proving the correctness of the Algorithm \ref{alg:lwfopc}, we need several lemmas.
\begin{lemma}\label{lem1lwfpc}
After line 10 of Algorithm \ref{alg:lwfopc}, $G$ and $H$ have the
same adjacencies.
\end{lemma}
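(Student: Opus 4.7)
The plan is to prove the lemma by double inclusion, showing (i) no true adjacency in $G$ is ever removed from $H$, and (ii) every non-adjacency in $G$ is eventually removed, so that after line 10 the skeleton of $H$ agrees with that of $G$.

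For direction (i), I would rely on faithfulness together with the global Markov property for LWF CGs. If $u$ and $v$ are adjacent in $G$, then no subset $S\subseteq V\setminus\{u,v\}$ $c$-separates $u$ from $v$ in $G$, since any separating set would have to block the single edge between them in $(G_{An(\{u,v\}\cup S)})^m$, which is impossible. By faithfulness, the CI test $u\perp\!\!\!\perp_p v\mid S$ fails for every $S$ that the algorithm could try, so the edge $u\erelbar{00} v$ is never removed from $H$ in lines 2--10.

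For direction (ii), suppose $u$ and $v$ are non-adjacent in $G$. I would invoke the standard fact for LWF chain graphs (going back to Studený and Frydenberg--Lauritzen) that non-adjacent vertices $u,v$ in $G$ are $c$-separated by some subset $S^*$ of $bd_G(u)$ (or, by symmetry, of $bd_G(v)$), where the boundary is taken in $G$. Combined with direction (i), this subset $S^*$ is contained in $ad_H(u)\setminus\{v\}$ throughout the algorithm, because direction (i) already implies $bd_G(u)\subseteq ad_H(u)$ at every iteration. Hence, when the algorithm reaches level $i=|S^*|$, the pair $(u,v)$ will still be adjacent in $H$ (since it has not been removed at a smaller level either by $S^*$ or by any other separating set), and $S^*$ is a candidate conditioning set of the correct cardinality inside $ad_H(u)\setminus\{v\}$. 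By faithfulness the test $u\perp\!\!\!\perp_p v\mid S^*$ will succeed, so the edge is removed by line 10 at the latest when $i=|S^*|$.

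The step I expect to be the main obstacle is the invocation of the existence of a $c$-separating set $S^*\subseteq bd_G(u)$ for non-adjacent pairs $(u,v)$, since strictly speaking this is a property of LWF CGs that requires either citing a prior result or giving a short proof via the moralization construction. If a direct citation is inconvenient, I would supply a brief argument: let $S^*=bd_G(u)\setminus de_G(u)$ together with enough boundary vertices of the chain component of $u$ to block undirected connections; one then verifies via the global Markov property that $S^*$ $c$-separates $u$ from $v$ in $G$, using that $v\notin bd_G(u)$ by assumption. Once this separating-set result is available, the two directions together establish that after line 10 the edge set of $H$ coincides exactly with the adjacencies of $G$, proving the lemma.
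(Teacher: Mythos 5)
Your proposal is correct and follows the same overall skeleton as the paper's proof: a double inclusion in which faithfulness guarantees that true adjacencies of $G$ are never deleted from $H$, and the existence of a separating set inside the running adjacency set guarantees that every non-adjacency is eventually deleted. The only substantive difference is how the key sub-step of direction (ii) is justified. The paper appeals to its minimal-separation procedure (Algorithm \ref{alg2sepLWF}), which outputs a separator $Z$ contained in the neighbourhood of $A$ in the moral graph $(G_{An(A\cup B)})^m$, and then asserts $Z\subseteq ad_H(A)\setminus B$; note that moral-graph neighbours include co-parents that need not be adjacent to $A$ in $G$, so this containment is not immediate and deserves more care than the paper gives it. You instead invoke the local Markov property of LWF CGs: for non-adjacent $u,v$, at least one of $v\in nd(u)\setminus cl(u)$ or $u\in nd(v)\setminus cl(v)$ holds, whence $u\perp\!\!\!\perp v\mid bd_G(u)$ or $u\perp\!\!\!\perp v\mid bd_G(v)$; since $bd_G(u)\subseteq ad_G(u)\subseteq ad_H(u)$ by direction (i), this separator is always available to the algorithm (both ordered pairs $(u,v)$ and $(v,u)$ are examined, so the symmetric case is covered). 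This route buys a cleaner and fully self-contained argument that sidesteps the moral-graph subtlety. One small correction: your fallback construction of $S^*$ is unnecessarily complicated and slightly muddled --- $bd_G(u)$ already contains no descendants of $u$ (a parent or neighbour of $u$ cannot lie in $de(u)$ without creating a partially directed cycle), so the plain boundary $bd_G(u)$ (resp.\ $bd_G(v)$) works directly; the only citation you need is the equivalence of the local and global Markov properties for LWF CGs, which is standard.
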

\begin{proof}
Consider any pair of nodes $A$ and $B$
in $G$. If $A\in ad_G(B)$, then $A\not\perp\!\!\!\perp B|S$ for all $S\subseteq V\setminus (A\cup B)$ by the faithfulness assumption. Consequently, $A\in ad_H(B)$ at all times. On the other hand, if $A\not\in ad_G(B)$ (equivalently $B\not\in ad_G(A)$), Algorithm \ref{alg2sepLWF} \citep{jv-pgm18} returns a set $Z\subseteq ad_H(A)\setminus B$ (or $Z\subseteq ad_H(B)\setminus A$) such that $A\perp\!\!\!\perp_p B|Z$.  This means there exist $0\le i\le |V_H|-2$ such that the edge $A-B$ is removed from $H$ in line 7. Consequently, $A\not\in ad_H(B)$ after line 10.
\end{proof}
\begin{algorithm}
\caption{Minimal separation}\label{alg2sepLWF}
	\SetAlgoLined
	\KwIn{Two non-adjacent nodes $A, B$ in the LWF chain graph $G$.}
	\KwOut{Set $Z$, that is a minimal separator for $A, B$.}
	Construct $G_{An(A\cup B)}$\;
	Construct $(G_{An(A\cup B)})^m$\;
	Set $Z'$ to be $ne(A)$ (or $ne(B)$) in $(G_{An(A\cup B)})^m$\;
	\tcc{$Z'$ is a separator because, according to the local Markov property of an undirected graph, a vertex is conditionally independent of all other vertices in the graph, given its neighbors \citep{l}.}
	Starting from $A$, run BFS. Whenever a node in $Z'$ is met, mark it if it is not already marked, and do not continue along
	that path. When BFS stops, let $Z''$ be the set of nodes which are marked. Remove all markings\;
	Starting from $B$, run BFS. Whenever a node in $Z''$ is met, mark it if it is not already marked, and do not continue along
	that path. When BFS stops, let  $Z$ be the set of nodes which are marked\;
	\Return{$Z$}\;
\end{algorithm}

\begin{lemma}\label{lem2lwfpc}
$G$ and $H^*$ have the same minimal complexes and adjacencies after line 19 of Algorithm \ref{alg:lwfopc}.
\end{lemma}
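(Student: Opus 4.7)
The plan is to split Lemma~\ref{lem2lwfpc} into two claims: (i) $G$ and $H^*$ share the same adjacencies after line~19, and (ii) they share the same minimal complexes.

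Claim~(i) is essentially a bookkeeping observation: the complex-recovery block (lines 11--19 of Algorithm~\ref{alg:lwfopc}) only orients existing edges of $H$ and never adds or deletes one, so the skeleton of $H^*$ equals the skeleton of $H$. Combining this with Lemma~\ref{lem1lwfpc} settles the adjacency part.

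For claim~(ii) I would argue both inclusions. \emph{Forward inclusion.} Let $u\to w_1 - w_2 - \cdots - w_r \gets v$ be a minimal complex of $G$. By (i), $u$ and $v$ are non-adjacent in $H$, so a separating set $S_{uv}$ was stored on line~7. Minimality of the complex together with faithfulness forces $w_1\notin S_{uv}$; and in the moral graph of the smallest ancestral set containing $\{u,v\}\cup S_{uv}\cup\{w_1\}$, the moralisation of the complex attaches $u$ to $v$ through $w_1$. Hence $S_{uv}\cup\{w_1\}$ fails to c-separate $u$ from $v$ in $G$, so by faithfulness $u\not\perp\!\!\!\perp_p v\mid S_{uv}\cup\{w_1\}$ and line~14 orients $u-w_1$ as $u\to w_1$; symmetrically, $v-w_r$ becomes $v\to w_r$. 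Minimality of the complex in $G$ further guarantees that no intermediate $w_i$ is adjacent in $G$ (hence in $H$) to either $u$ or $v$, so the pattern-extraction criterion described in the comment block of line~19 labels both $u\to w_1$ and $v\to w_r$ as complex arrows and preserves the whole complex inside the pattern of $H^*$.

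\emph{Backward inclusion.} Suppose $u\to w_1 - \cdots - w_r \gets v$ survives as a minimal complex in $H^*$. Then $u,v$ are non-adjacent in $H^*$, hence in $G$ by (i), and both end-orientations were created on line~14 and kept by the pattern-extraction step. Creation gives $u\not\perp\!\!\!\perp_p v \mid S_{uv}\cup\{w_1\}$ while $S_{uv}$ does c-separate $u$ from $v$; by faithfulness, adjoining the single neighbour $w_1$ of $u$ opens a path in the relevant moral graph. The only LWF configuration in which extending a separator by one neighbour of $u$ can convert a blocked path into an active one is that $w_1$ sits at the head of a directed arrow out of $u$ that is then closed up by moralisation along the chain component containing $w_1$; this forces $u\to w_1$ in $G$. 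A symmetric argument yields $v\to w_r$ in $G$. The pattern-extraction condition, requiring an undirected chain $w_1-\cdots-w_r$ with no intermediate vertex adjacent to $u$ or $v$, coincides with minimality of the complex in $G$ once (i) matches the adjacencies. The main obstacle is this backward step: one must make precise, and reduce to a clean graphical statement, the claim that adjoining a single neighbour $w_1$ of $u$ to the c-separator $S_{uv}$ can break c-separation only when $w_1$ lies at the head of a complex arrow with tail $u$. Once that graphical translation is pinned down, everything else reduces to invoking the correctness of the Ma--Xie--Geng pattern-extraction procedure on the candidate arrows produced by lines 12--18.
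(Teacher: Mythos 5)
Your route is essentially the paper's: adjacencies come from Lemma~\ref{lem1lwfpc} plus the observation that lines 11--19 only orient edges; the forward inclusion is argued exactly as in the paper's two cases (the paper splits $r=1$ from $r\ge 2$ where you treat the general complex uniformly, but the content --- $w_1\notin S_{uv}$ and $u\not\perp\!\!\!\perp_p v\mid S_{uv}\cup\{w_1\}$ by faithfulness, hence line 14 fires --- is the same); and, like the paper, you ultimately delegate the elimination of spurious orientations to the correctness of the Ma--Xie--Geng pattern-extraction step. Two differences are worth flagging. First, you drop a check the paper makes explicitly: that a complex arrow $u\to w$ is never also oriented in the \emph{reverse} direction by line 14 when $w$ plays the role of the candidate endpoint for some other non-adjacent pair; the paper argues this via $u\in S_{vw}$ and $w\perp\!\!\!\perp_p v\mid S_{wv}\cup\{u\}$, and without some such argument the forward inclusion is not complete, since a conflicting orientation would destroy the complex. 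Second, the obstacle you (honestly) leave open in the backward inclusion is actually closable, and more cleanly than you suggest: if $u$ and $w$ are adjacent in $G$ other than by $u\to w$ (i.e., $u-w$ or $w\to u$), then $w$ already lies in $An(\{u,v\}\cup S_{uv})$, so adjoining $w$ to the conditioning set changes neither the ancestral set nor its moral graph and can only enlarge the separator; hence c-separation cannot be destroyed, and any arrow created on line 14 must be a genuine directed edge $u\to w$ of $G$. That said, the paper itself does not push the backward direction further than you do --- it exhibits the $B\to C$ example to show why non-complex arrows can be created and then appeals to the pattern-extraction procedure of \citep{mxg} to remove them --- so on that half your proposal is at the same level of completeness as the published proof.
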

\begin{proof}
$G$ and $H^*$ have the same adjacencies by Lemma \ref{lem1lwfpc}. Now we show that any arrow that belongs to a minimal complex in $G$ is correctly oriented in line 15 of Algorithm \ref{alg:lwfopc}, in the sense that it is an arrow
with the same orientation in $G$. For this purpose, consider the following two cases:

\textbf{Case 1:} $u\to w\gets v$ is an induced subgraph in $G$. So, $u, v$ are not adjacent in $H$ (by Lemma \ref{lem1lwfpc}), $u-w\in H^*$ (by Lemma \ref{lem1lwfpc}), and $u\not\perp\!\!\!\perp_p v|(S_{uv}\cup \{w\})$ by the faithfulness assumption. So, $u - w$ is oriented as $u\to w$ in $H^*$ in line 15. Obviously, we will not orient it as $w\to u$.

\textbf{Case 2:} $u\to w-\cdots-z\gets v$, where $w\ne z$ is a minimal complex in $G$. So, $u, v$ are not adjacent in $H$ (by Lemma \ref{lem1lwfpc}), $u-w\in H^*$ (by Lemma \ref{lem1lwfpc}), and $u\not\perp\!\!\!\perp_p v|(S_{uv}\cup \{w\})$ by the faithfulness assumption. So, $u - w$ is oriented as $u\to w$ in $H^*$ in line 15. Since $u\in S_{vw}$ and $w\perp\!\!\!\perp_p v|(S_{wv}\cup \{u\})$ by the faithfulness assumption  so $u,v$, and $w$ do not satisfy the conditions and hence we will not orient $u - w$ as $w\to u$.

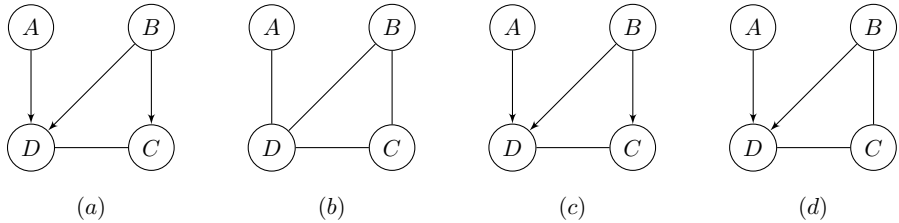
\begin{figure}[ht]
    \centering
	\[\begin{tikzpicture}[scale=.8, transform shape]
	\tikzset{vertex/.style = {shape=circle,draw,minimum size=1.5em}}
	\tikzset{edge/.style = {->,> = latex'}}
	\node[vertex] (o) at  (5,-2) {$B$};
	\node[vertex] (p) at  (3,-4) {$D$};
	\node[vertex] (q) at  (3,-2) {$A$};
	\node[vertex] (r) at  (5,-4) {$C$};
	\node (s) at (4, -5) {$(a)$};
	\draw[edge] (q) to (p);
	\draw[edge] (o) to (r);
	\draw[edge] (o) to (p);
	\draw (p) to (r);
	
	\node[vertex] (u) at  (9,-2) {$B$};
	\node[vertex] (v) at  (7,-4) {$D$};
	\node[vertex] (x) at  (7,-2) {$A$};
	\node[vertex] (w) at  (9,-4) {$C$};
	\node (y) at (8, -5) {$(b)$};
	\draw (x) to (v);
	\draw (u) to (v);
	\draw (u) to (w);
	\draw (v) to (w);
	
	\node[vertex] (a) at  (13,-2) {$B$};
	\node[vertex] (b) at  (11,-4) {$D$};
	\node[vertex] (c) at  (11,-2) {$A$};
	\node[vertex] (d) at  (13,-4) {$C$};
	\node (e) at (12, -5) {$(c)$};
	\draw[edge] (a) to (d);
	\draw[edge] (a) to (b);
	\draw[edge] (c) to (b);
	\draw (b) to (d);
	
	\node[vertex] (f) at  (17,-2) {$B$};
	\node[vertex] (g) at  (15,-4) {$D$};
	\node[vertex] (h) at  (15,-2) {$A$};
	\node[vertex] (i) at  (17,-4) {$C$};
	\node (j) at (16, -5) {$(d)$};
	\draw (f) to (i);
	\draw[edge] (h) to (g);
	\draw[edge] (f) to (g);
	\draw (g) to (i);
	
	\end{tikzpicture}\]
    \caption{(a) The LWF CG $G$, (b) the skeleton of $G$, (c) $H^*$ before executing the line 19 in Algorithm \ref{alg:lwfopc}, and (d) $H^*$ after executing the line 19 in Algorithm \ref{alg:lwfopc}. 
}
    \label{patternlwfpc}
\end{figure}

Consider the chain graph $G$ in Figure \ref{patternlwfpc}(a). After applying the skeleton recovery of Algorithm \ref{alg:lwfopc}, we obtain $H$, the skeleton of $G$, in Figure \ref{patternlwfpc}(b). In the execution of the complex recovery of
Algorithm \ref{alg:lwfopc}, when we pick $A,B$ in line 12 and $C$ in line 13, we have $A\perp\!\!\!\perp B|\emptyset$, that is, $S_{AB}=\emptyset$, and
find that $A\not\perp\!\!\!\perp B|C$. Hence we orient $B-C$ as $B\to C$ in line 15, which is not a complex arrow in $G$.
Note that we do not orient $C-B$ as $C\to B$: the only chance we might do so is when $u = C, v = A$, and $w = B$ in the inner loop of the complex recovery of Algorithm \ref{alg:lwfopc}, but we have $B\in S_{AC}$ and the condition in line 14 is not
satisfied. Hence, the graph we obtain before the last step of complex recovery in Algorithm \ref{alg:lwfopc} must be the one given in
Figure \ref{patternlwfpc}(c), which differs from the recovered pattern in Figure \ref{patternlwfpc}(d). This illustrates the necessity of the last step of complex recovery in Algorithm \ref{alg:lwfopc}. To see how the edge $B\to C$ is removed in the last step of complex recovery in Algorithm \ref{alg:lwfopc}, we observe that, if we follow the procedure described in the comment after line 19 of Algorithm \ref{alg:lwfopc}, the only chance
that $B\to C$ becomes one of the candidate complex arrow pair is when it is considered together with $A\to D$. However, the only undirected path between $C$ and $D$ is simply $C-D$ with $D$ adjacent to $B$.
Hence $B\to C$ stays unlabeled and will finally get removed in the last step of complex recovery in Algorithm \ref{alg:lwfopc}.

Consequently, $G$ and $H^*$ have the same minimal complexes and adjacencies after line 19.
\end{proof}

\section{Proof of Theorems in Section 4.}\label{appendixB}

\firstthm*
\begin{proof}
	The proof of Theorem \ref{thm:correctPCstableLWF} is completely analogous to the proof of the correctness of the original PC-like algorithm (see Appendix \ref{appendixA}).
\end{proof}

\secondthm*
\begin{proof}
	We consider the removal or retention of an arbitrary edge $u\erelbar{00} v$ at some level $i$.
	The ordering of the variables determines the order in which the edges (line 7 of Algorithm \ref{alg:lwfspc}) and the subsets $S$ of $a_H(u)$ and $a_H(v)$ (line 8 of Algorithm \ref{alg:lwfspc}) are considered. By
	construction, however, the order in which edges are considered does not affect the sets $a_H(u)$ and $a_H(v)$.
	
	If there is at least one subset $S$ of $a_H(u)$ or $a_H(v)$ such that $u\perp\!\!\!\perp_p v|S$, then any
	ordering of the variables will find a separating set for $u$ and $v$. (Different orderings
	may lead to different separating sets as illustrated in Example 2, but all edges that have a separating set will eventually be removed, regardless of the ordering). Conversely, if there is no subset $S'$ of $a_H(u)$ or $a_H(v)$ such that $u\perp\!\!\!\perp_p v|S'$, then no ordering will find a separating set.
	
	Hence, any ordering of the variables leads to the same edge deletions, and therefore to
	the same skeleton.
\end{proof}

\thirdthm*
\begin{proof}
	The skeleton of the learned pattern is correct by Theorem \ref{thm:correctPCstableLWF}.
	Since $u, v$ are not adjacent they are \textit{c}-separated given some subset $S\setminus\{u,v\}$ (see Algorithm \ref{alg2sepLWF}). Based on the \textit{c}-separation criterion for LWF CGs (see section 2), if $w$ is a node on a minimal complex in $G$ such that $u$ and $w$ are adjacent then $u\not v |S\cup\{w\}$ for any $S\setminus\{u,v\}$ due to the moralization procedure.  
	As a result, $u-w$ edges are all unambiguous and so the $U$-structures are correct as in the CPC/MPC-like algorithm. Therefore,  the output of the (stable) CPC/MPC-like algorithm is a pattern that is Markov equivalent with $G$.
\end{proof}

\fourththm*
\begin{proof}
	The stable CPC/MPC-like algorithm have order independent skeleton, by
	Theorem \ref{thm:stableskeletonLWF}. In particular, this means that their adjacency sets are order independent. For non adjacent nodes $u$ and $v$ the decision about whether the undirected edge $u-w$ is unambiguous
	and/or a \textit{U}-structure is based on the adjacency sets of nodes $u$ and $v$, which are order independent. The rest of theorem follows straightforwardly from Theorems \ref{thm:stableskeletonLWF} and the first part of this proof.
\end{proof}

\section{Evaluation}\label{appendixC}
In this section, we evaluate the performance of our algorithm in various setups
using simulated / synthetic data sets. We first compare the performance of our algorithms with the LCD algorithm \citep{mxg} by running them
on randomly generated LWF CGs. 
Empirical simulations show that our PC-like algorithms achieves competitive results with the LCD algorithm in terms of error measures and runtime. 
All the results reported here are
based on our R implementations.
The R code and results are reported in our supplementary materials available at: \url{https://github.com/majavid/PC4LWF2020}.

\subsection{Performance on Random LWF CGs}
To investigate the performance of the proposed algorithms, we use the same approach as in \citep{mxg} for
evaluating the performance of the LCD algorithm on LWF CGs. We run our algorithms PC4LWF \& SPC4LWF, and the LCD algorithm
on randomly generated LWF CGs and we compare the results and report summary error measures.

\subsubsection{Data Generation Procedure}
First we explain the way in which the random LWF CGs and random samples are generated.
Given a vertex set $V$, let $p = |V|$ and $N$ denote the average degree of edges (including undirected, pointing out, and pointing in) for each vertex. We generate a random LWF CG on $V$ as
follows:
\begin{enumerate}
    \item Order the $p$ vertices and initialize a $p\times p$ adjacency matrix $A$ with zeros;
    \item  For each element in the lower triangle part of $A$, set it to be a random number generated from
a Bernoulli distribution with probability of occurrence $s = N/(p-1)$;
\item Symmetrize $A$ according to its lower triangle;
\item Select an integer $k$ randomly from $\{1,\dots,p\}$ as the number of chain components;
\item  Split the interval $[1, p]$ into $k$ equal-length subintervals $I_1,\dots,I_k$ so that the set of variables
falling into each subinterval $I_m$ forms a chain component $C_m$; and, 
\item Set $A_{ij} = 0$ for any $(i, j)$ pair such that $i \in I_l, j \in I_m$ with $l > m$.
\end{enumerate}
This procedure yields an adjacency matrix $A$ for a chain graph with $(A_{ij} = A_{ji} = 1)$ representing an undirected edge between $V_i$ and $V_j$ and $(A_{ij} =1,  A_{ji} =0)$ representing a directed edge
from $V_i$ to $V_j$. Moreover, it is not difficult to see that $\mathbb{E}[\textrm{vertex degree}] = N$, where an adjacent vertex can
be linked by either an undirected or a directed edge.

Given a randomly generated chain graph $G$ with ordered chain components $C_1,\dots,C_k$, we generate a Gaussian distribution on it via the $\mathsf{rnorm.cg}$ function from the \href{http://www2.uaem.mx/r-mirror/web/packages/lcd/lcd.pdf}{LCD} R package. 

\subsubsection{Experimental Results}
We evaluate the performance of the proposed algorithms in terms of the six measurements that are commonly used~\citep{Colombo2014,Kalisch07,mxg,Tsamardinos2006} for constraint-based learning algorithms: (a) the true positive
rate (TPR) (also known as sensitivity, recall, and hit rate), (b) the false positive rate (FPR) (also known as fall-out), (c) the true discovery rate (TDR) (also known as precision or positive predictive value), (d) accuracy (ACC) for the skeleton, (e) the structural Hamming distance (SHD) (this is the metric described in \citep{Tsamardinos2006} to  compare the
structure of the learned and the original graphs), and (f) run-time for the pattern recovery algorithms. In short, $TPR$ is the ratio of  the number of correctly identified edges over total number of edges, $FPR$ is the ratio of the number of incorrectly identified edges over total number of gaps, $TDR$ is the ratio of  the number of correctly identified edges over total number of edges (both in estimated graph), $ACC=\frac{\textrm{true positive }(TP) +\textrm{ true negative }(TN)}{Pos+Neg}$, and
$SHD$ is the number of legitimate operations needed to change the current resulting graph to the true CG,
where legitimate operations are: (a) add or delete an edge and (b) insert, delete or reverse an edge
orientation. In principle, a large TPR, TDR, and ACC, a small FPR and SHD indicate good performance.

In our simulation, we change three parameters $p$ (the number of vertices), $n$ (sample size) and
$N$ (expected number of adjacent vertices) as follows:
\begin{itemize}
    \item $p=50$ in low-dimensional settings and $p=300$ in high-dimensional settings,
    \item $n\in\{200, 2000\}$ in low-dimensional settings and $n\in\{100,200,300\}$ in high-dimensional settings, and
    \item $N\in\{2,3\}$.
\end{itemize}
\begin{figure}
	\centering
\thisfloatpagestyle{empty} 
	\centering
	\includegraphics[scale=.45,page=1]{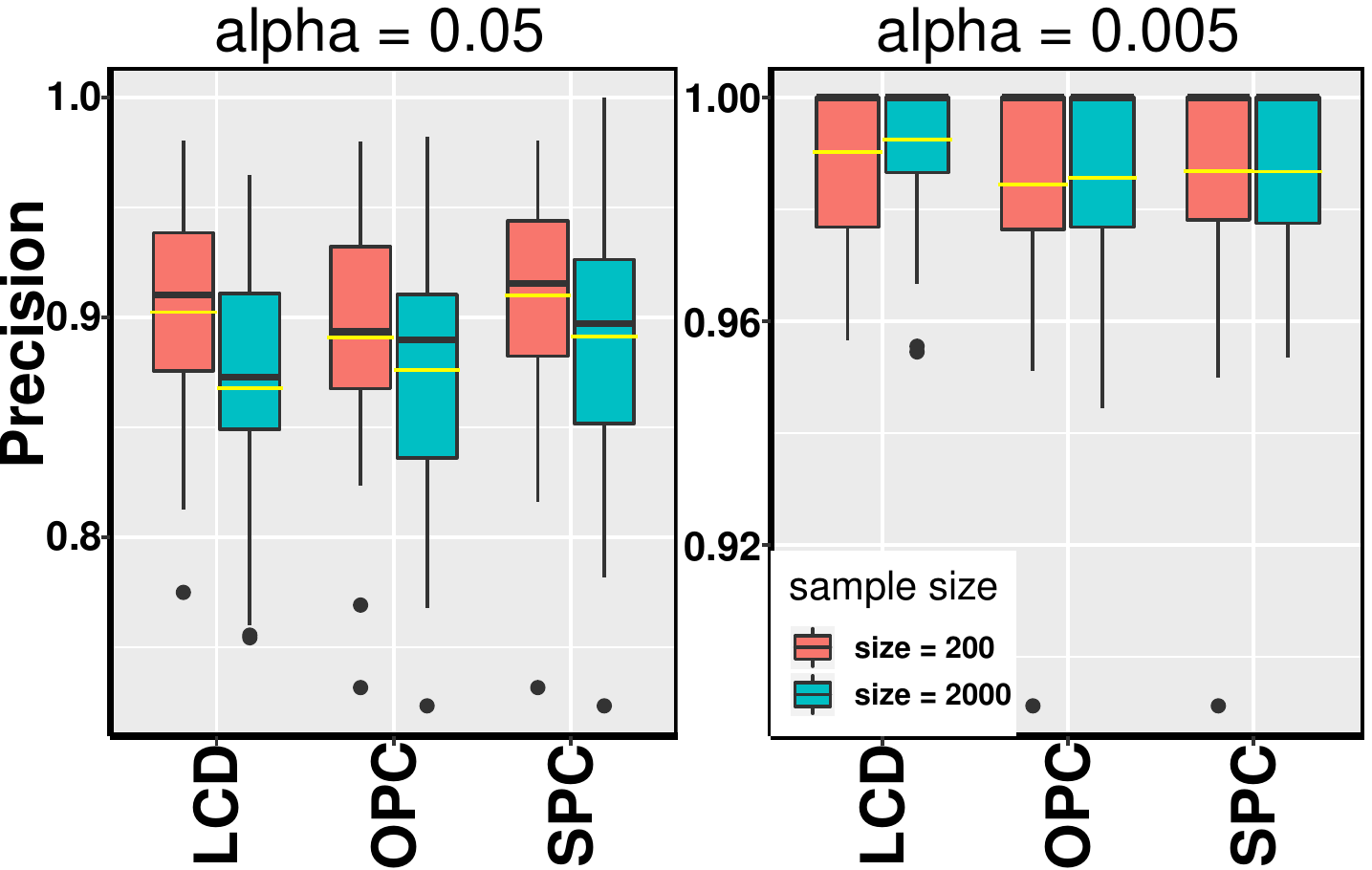}
	\includegraphics[scale=.45,page=1]{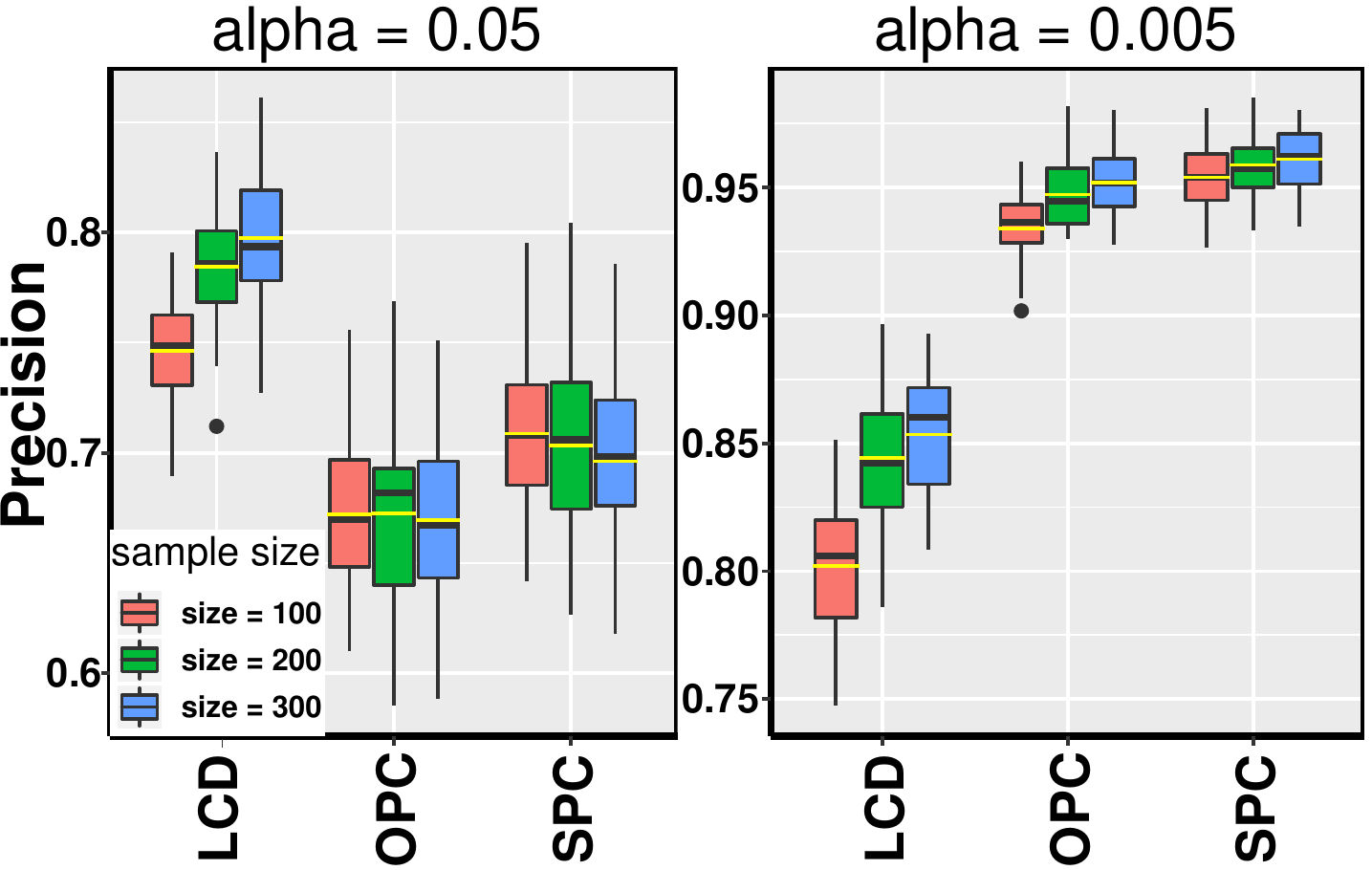}
	\includegraphics[scale=.45,page=2]{images/boxplots502.pdf}
	\includegraphics[scale=.45,page=2]{images/3002boxplots.pdf}
	\includegraphics[scale=.45,page=3]{images/boxplots502.pdf}
	\includegraphics[scale=.45,page=3]{images/3002boxplots.pdf}
	\includegraphics[scale=.45,page=4]{images/boxplots502.pdf}
	\includegraphics[scale=.45,page=4]{images/3002boxplots.pdf}
	\includegraphics[scale=.45,page=5]{images/boxplots502.pdf}
	\includegraphics[scale=.45,page=5]{images/3003boxplots.pdf}
	\caption{Performance of the LCD and PC-like algorithms (original (OPC) and stable (SPC)) for randomly generated Gaussian chain graph models:
	 over 30 repetitions with 50 (the first two columns) and 300 (the last two columns) variables, expected degree N = 2, and significance levels $\alpha=0.05,0.005$.}
	\label{fig:1003a}
\end{figure}
For each $(p,N)$ combination, we first generate 30 random LWF CGs. We then generate a
random Gaussian distribution based on each graph and draw an identically independently distributed
(i.i.d.) sample of size $n$ from this distribution for each possible $n$. For each sample, two different
significance levels $(\alpha = 0.05, 0.005)$ are used to perform the hypothesis tests. The \textit{null hypothesis} $H_0$ is ``two variables $u$ and $v$ are conditionally independent given a set $C$ of variables" and alternative $H_1$ is that $H_0$ may not hold. We then
compare the results to access the influence of the significance testing level on the performance of our algorithms. In order to learn an \textit{undirected independence graph} (UIG) from a given data set in the LCD algorithm we used the stepwise forward selection (FWD-BIC) algorithm \citep{Edwards2010} in high-dimensional settings and the \textit{Incremental Association Markov blanket} discovery (IAMB) algorithm \citep{Tsamardinos0Mb} in low-dimensional settings.

\begin{remark}
	Since both the PC-like algorithm and the LCD algorithm assume faithfulness and the CKES algorithm \citep{psn} does not assume the faithfulness requirement, the comparison between our proposed algorithms and the CKES algorithm may seem unfair (for a detailed discusion see \citep{psn}). Also, we did not compare the proposed algorithms in this paper with the ASP algorithm  due to the scalability issues discussed in \citep{sjph}.  
\end{remark}

\bibliography{references}

\end{document}